\definecolor{cvprblue}{rgb}{0.21,0.49,0.74}
\newtheorem{theorem}{Theorem}
\newtheorem{lemma}{Lemma}
\newtheorem{remark}{Remark}
\crefname{section}{Sec.}{Secs.}
\Crefname{section}{Section}{Sections}
\crefname{appendix}{App.}{Apps.}
\Crefname{appendix}{Appendix}{Appendices}
\crefname{table}{Tab.}{Tabs.}
\Crefname{table}{Table}{Tables}
\crefname{figure}{Fig.}{Figs.}
\Crefname{figure}{Figure}{Figures}
\crefname{equation}{Eq.}{Eqs.}
\Crefname{equation}{Equation}{Equations}
\crefname{theorem}{Thm.}{Thms.}
\Crefname{theorem}{Theorem}{Theorems}
\crefname{lemma}{Lem.}{Lems.}
\Crefname{lemma}{Lemma}{Lemmas}
\crefname{remark}{Rem.}{Rems.}
\Crefname{remark}{Remark}{Remarks}
\crefname{corollary}{Cor.}{Cors.}
\Crefname{corollary}{Corollary}{Corollaries}
\crefname{algorithm}{Alg.}{Algs.}
\Crefname{algorithm}{Algorithm}{Algorithms}
\definecolor{cellred}{RGB}{213, 123, 101}
\definecolor{cellgreen}{RGB}{0, 205, 0}
\definecolor{cellblue}{RGB}{54, 125, 189}
\definecolor{codegreen}{rgb}{0,0.6,0}
\definecolor{codegray}{rgb}{0.5,0.5,0.5}
\definecolor{codepurple}{rgb}{0.58,0,0.82}
\definecolor{backcolour}{rgb}{1.0,1.0,1.0}
\lstdefinestyle{mystyle}{
    backgroundcolor=\color{backcolour},
    commentstyle=\color{codegreen},
    keywordstyle=\color{magenta},
    numberstyle=\tiny\color{codegray},
    stringstyle=\color{codepurple},
    basicstyle=\ttfamily\scriptsize,
    breakatwhitespace=false,
    breaklines=true,
    captionpos=b,
    keepspaces=true,
    numbers=left,
    numbersep=5pt,
    showspaces=false,
    showstringspaces=false,
    showtabs=false,
    tabsize=2
}
\newcommand{\tocite}[1]{{\color{red} [TO CITE]}}
\newcommand{\methodname}{ReCFG}
\newcommand{\method}{\mbox{\texttt{\methodname}}\xspace}
\title{Rectified Diffusion Guidance for Conditional Generation}
\author{%
    Mengfei Xia$^{1,2}$\thanks{Work finished during internship at Ant Group.} \quad
    Nan Xue$^2$ \quad
    Yujun Shen$^2$\footnotemark[2] \quad
    Ran Yi$^3$ \quad
    Tieliang Gong$^4$ \quad
    Yong-Jin Liu$^1$\thanks{Corresponding author.} \quad \\[5pt]
    $^1$Tsinghua University \quad
    $^2$Ant Group \quad \\[2pt]
    $^3$Shanghai Jiao Tong University \quad
    $^4$Xi'an Jiao Tong University
}
\begin{document}

\maketitle

\begin{abstract}

Classifier-Free Guidance (CFG), which combines the conditional and unconditional score functions with two coefficients summing to one, serves as a practical technique for diffusion model sampling.
Theoretically, however, denoising with CFG \textit{cannot} be expressed as a reciprocal diffusion process, which may consequently leave some hidden risks during use.
In this work, we revisit the theory behind CFG and rigorously confirm that the improper configuration of the combination coefficients (\textit{i.e.}, the widely used summing-to-one version) brings about expectation shift of the generative distribution.
To rectify this issue, we propose \method%
\footnote{ReCFG, pronounced as ``reconfigure'', is the abbreviation for ``rectified Classifier-Free Guidance''.}
with a relaxation on the guidance coefficients such that denoising with \method strictly aligns with the diffusion theory.
We further show that our approach enjoys a \textbf{\textit{closed-form}} solution given the guidance strength.
That way, the rectified coefficients can be readily pre-computed via traversing the observed data, leaving the sampling speed barely affected.
Empirical evidence on real-world data demonstrate the compatibility of our post-hoc design with existing state-of-the-art diffusion models, including both class-conditioned ones (\textit{e.g.}, EDM2 on ImageNet) and text-conditioned ones (\textit{e.g.}, SD3 on CC12M), without any retraining.
Code is available at \href{https://github.com/thuxmf/recfg}{https://github.com/thuxmf/recfg}.

\end{abstract}

\section{Introduction}\label{sec:intro}

Diffusion probabilistic models (DPMs)~\citep{sohl2015deep,ho2020denoising,song2020score}, known simply as {diffusion models}, have achieved unprecedented capability improvement of high-resolution image generation.
It is well recognized that, DPMs are the most prominent generative paradigm for a broad distribution (\textit{i.e.}, text-to-image generation)~\citep{podell2024sdxl,chen2024pixartalpha,esser2024sd3}.
Among DPM literature, Classifier-Free Guidance (CFG)~\citep{ho2022classifierfree} serves as an essential factor, enabling better conditional sampling in various fields~\citep{rombach2022high,poole2023dreamfusion}.
Vanilla conditional sampling via DPMs introduces the conditional score function $s_t(\mathbf x, c)=\nabla_{\mathbf x_t}\log q_t(\mathbf x_t|c)$, resulting in poor performance in which synthesized samples appear to be visually incoherent and not faithful to the condition, even for large-scale models~\citep{rombach2022high}.
By drawing lessons from Bayesian theory, CFG employs an interpolation between conditional and unconditional score functions with a preset weight $\gamma$, \textit{i.e.},
\begin{align}
s_{t,\gamma}(\mathbf x,c)=\gamma\nabla_{\mathbf x}\log q_t(\mathbf x|c)+(1-\gamma)\nabla_{\mathbf x}\log q_t(\mathbf x),
\end{align}
in which $\nabla_{\mathbf x_t}\log q_t(\mathbf x_t)$ is the unconditional score function by annihilating the condition effect.
By doing so, DPMs turn out to formulate the underlying distribution with a gamma-powered distribution~\citep{bradley2024classifierfreeguidancepredictorcorrector}, \textit{i.e.},
\begin{align}
q_{t,\gamma}(\mathbf x|c)=q_t(\mathbf x|c)^\gamma q_t(\mathbf x)^{1-\gamma},
\end{align}
which is proportional to $q_t(\mathbf x)q_t(c|\mathbf x)^\gamma$.
Enlarging $\gamma>1$ focuses more on the classifier effect $q_t(c|\mathbf x)$, concentrating on better exemplars of given condition and thereby sharpening the gamma-powered distribution.
In other words, CFG is designed to promote the influence of the condition.

However, inspired by seminal works~\citep{bradley2024classifierfreeguidancepredictorcorrector}, we argue that denoising with CFG cannot be expressed as a reciprocal of vanilla diffusion process by adding Gaussian noises, since the normally nonzero score function expectation of gamma-powered $q_{t,\gamma}(\mathbf x|c)$ violates the underlying theory of DPMs.
Theoretically, score functions with zero expectation at all timesteps guarantee that the denoised $\tilde{\mathbf x}_0$ has expectation $\mathbb E[\tilde{\mathbf x}_0]=\frac{\alpha_0}{\alpha_T}\mathbb E[\mathbf x_T]$, thus $\mathbb E[\tilde{\mathbf x}_0]=\mathbb E[\mathbf x_0]$ and no bias on the conditional fidelity.
Therefore, this theoretical flaw leaves some hidden risks during use, manifesting as a severe expectation shift phenomenon, \textit{i.e.}, the expectation of the gamma-powered distribution will be shifted away from the ground-truth of the conditional distribution $q_t(\mathbf x|c)$.
This is more conspicuous when applying larger $\gamma$.
\cref{fig:toy_model} clearly clarifies the expectation shift, in which the peak of induced distribution via CFG in \textcolor{cellred}{red} fails to coincide with that of ground-truth $q_0(\mathbf x_0|c)$.
This theoretical flaw is known in theory~\citep{du2024reducereuserecyclecompositional,karras2024guidingdiffusionmodelbad,bradley2024classifierfreeguidancepredictorcorrector}, while being largely ignored in practice.

\begin{figure*}[t]
\centering
\includegraphics[width=0.8\textwidth]{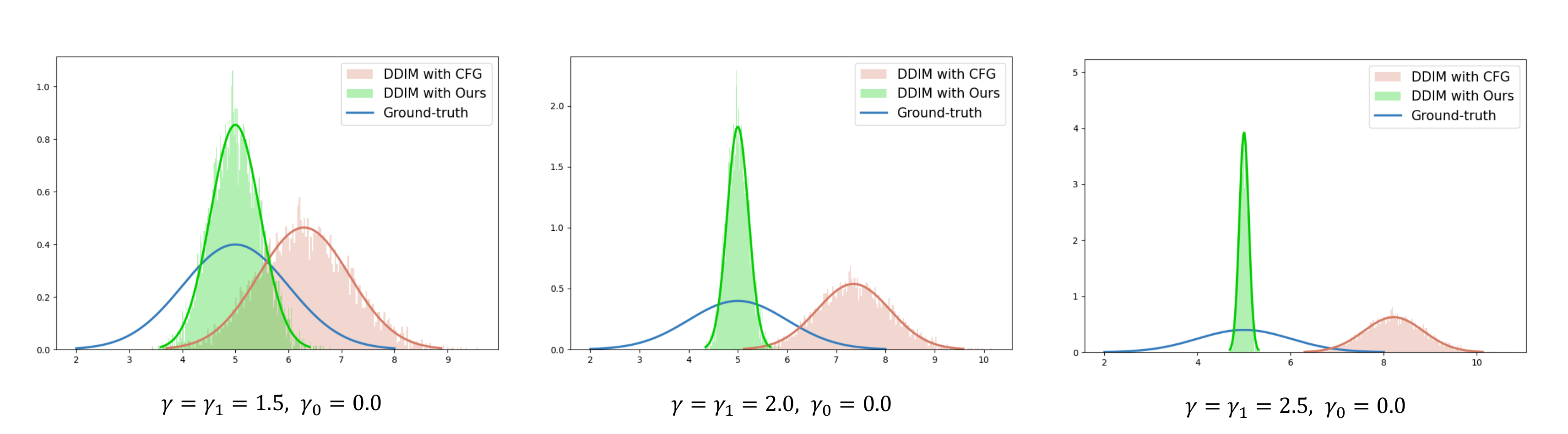}
\vspace{-10pt}
\caption{
    \textbf{Visualization} of expectation shift.
    The demonstrated toy data is simulated by $q_0(\mathbf x_0|c)\sim\mathcal N(c,1)$, $q(c)\sim\mathcal N(0,1)$, $q_0(\mathbf x_0)\sim\mathcal N(0,2)$.
    Gamma-powered distribution $q_{0,\gamma}(\mathbf x_0|c)$ from CFG~\citep{ho2022classifierfree} fails to recover the same conditional expectation as ground-truth due to expectation shift (\textit{i.e.}, probability density function and histogram by DDIM~\citep{song2020denoising} sampler in \textbf{\textcolor{cellred}{red}}).
    To make a further step, larger $\gamma$ suggests more severe expectation shift, \textit{i.e.}, the peak of $q_{0,\gamma}(\mathbf x_0|c)$ tends further away from $q_0(\mathbf x_0|c)$ (\textit{i.e.}, probability density function in \textbf{\textcolor{cellblue}{blue}}) as $\gamma$ goes from $1.5$ to $2.5$.
    As a comparison, our \method successfully recovers the ground-truth expectation and smaller variance (\textit{i.e.}, probability density function and histogram by DDIM~\citep{song2020denoising} sampler in \textbf{\textcolor{cellgreen}{green}}), consistent with the motivation of guided sampling.
}
\label{fig:toy_model}
\vspace{-10pt}
\end{figure*}

In this work, we first revisit the formulation of native CFG, theoretically confirming its flaw that we concluded above and summarizing as \Cref{thm:cfg}.
Then, to quantitatively reveal the consequent expectation shift phenomenon by CFG, we employ a toy distribution, enjoying closed-form description of the behavior on the gamma-powered distribution.
Under the toy settings, we analytically calculate the function of the precise value of expectation shift in correspondence with $\gamma$, as summarized in \Cref{thm:gs_shift}.
Motivated by theoretical compatibility and canceling the expectation shift, we apply relaxation on the guidance coefficients in native CFG by circumventing the constraint that two coefficients sum to one, enabling a more flexible control on the induced distributions.
To be more concrete, we propose to formulate the underlying distribution with \textit{two} coefficients, \textit{i.e.},
\begin{align}
q_{t,\gamma_1,\gamma_0}(\mathbf x|c)=q_t(\mathbf x|c)^{\gamma_1} q_t(\mathbf x)^{\gamma_0}.
\end{align}
Aiming at consistency with the diffusion theory and thus better guidance efficacy, we specially design the constraints on $\gamma_1$ and $\gamma_0$, and theoretically confirm the feasibility.
We further provide a closed-form solution to the constraints, and propose an algorithm to analytically determine $\gamma_0$ from a pre-computed lookup table in a post-hoc fashion.
Thanks to the neat formulation, we can employ pixel-wise $\gamma_0$ according to the lookup table involving guidance strength $\gamma_1$, condition $c$ and timestep $t$, as demonstrated in \cref{fig:coeff}.
We name the above process \method.
Compared with a global CFG weight applied on all denoising steps and pixels, \method may achieve more flexible and accurate guidance.
Experiments with state-of-the-art DPMs, including both class-conditioned ones (\textit{e.g.}, EDM2~\citep{Karras2024edm2}) and text-conditioned ones (\textit{e.g.}, SD3~\citep{esser2024sd3}) under different NFEs and guidance strengths show that our \method can achieve better guidance efficacy without retraining or extra time cost during inference stage.
Hence, our work offers a new perspective on guided sampling of DPMs, encouraging more studies in the field of guided generation.

\section{Related Work}\label{sec:related}

\noindent\textbf{DPMs and conditional generation.}
Diffusion probabilistic model (DPM) introduces a new scheme of generative modeling, formulated by forward diffusing and reverse denoising processes in a differential equation fashion~\citep{sohl2015deep,ho2020denoising,song2020score}.
Practically, it is trained by optimizing the variational lower bound.
Benefiting from this breakthrough, DPM achieves high generation fidelity, and even beat GANs on image generation.
By drawing lessons from conditional distribution, conditional generation~\citep{choi2021ilvr,lhhuang2023composer} takes better advantage of intrinsic intricate knowledge of data distribution, making DPM easier to scale up and the most promising option for generative modeling.
Among the literature, text-to-image generation injects the embedding of text prompts to DPM, faithfully demonstrating the text content~\citep{podell2024sdxl,chen2024pixartalpha,esser2024sd3}.

\noindent\textbf{Classifier-Free Guidance.}
Classifier-Free Guidance (CFG) serves as the successor of Classifier Guidance (CG)~\citep{dhariwal2021diffusion}, circumventing the usage of a classifier for noisy images.
Both CFG and CG are based on Bayesian theory, and attempt to formulate the underlying distribution by concentrating more on condition influence, achieving better conditional fidelity.
Despite great success in large-scale conditional generation, CFG faces a technical flaw that the guided distribution is not theoretically guaranteed to recover the ground-truth conditional distribution~\citep{du2024reducereuserecyclecompositional,karras2024guidingdiffusionmodelbad,bradley2024classifierfreeguidancepredictorcorrector,chidambaram2024what}.
To be more detailed, there exists a shifting issue that the expectation of guided distribution is drifted away from the correct one~\citep{bradley2024classifierfreeguidancepredictorcorrector,chidambaram2024what}.
This phenomenon may harm the condition faithfulness, especially for extremely broad distribution (\textit{e.g.}, open-vocabulary synthesis).

\section{Method}\label{sec:method}

\subsection{Background on conditional DPMs and CFG}\label{subsec:method.1}

Let $\mathbf x_0\in\mathbb R^D$ be a $D$-dimensional random variable with an unknown distribution $q_0(\mathbf x_0|c)$, where $c\sim q(c)$ is the given condition.
DPM~\citep{sohl2015deep,song2020score,ho2020denoising} introduces a forward process $\{\mathbf x_t\}_{t\in(0,T]}$ by gradually corrupting data signal of $\mathbf x_0$ with Gaussian noise, \textit{i.e.}, the following transition distribution holds for any $t\in(0,T]$:
\begin{align}\label{eq:tran}
q_{0t}(\mathbf x_t|\mathbf x_0,c)=q_{0t}(\mathbf x_t|\mathbf x_0)=\mathcal N(\alpha_t\mathbf x_0,\sigma_t^2\mathbf I),
\end{align}
in which $\alpha_t,\sigma_t\in\mathbb R^+$ are differentiable functions of $t$ with bounded derivatives, referred to as the \textit{noise schedule}.
Let $q_t(\mathbf x_t|c)$ be the marginal distribution of $\mathbf x_t$ conditioned on $c$, DPM ensures that $q_T(\mathbf x_T|c)\approx\mathcal N(\mathbf 0,\sigma^2\mathbf I)$ for some $\sigma>0$, and the signal-to-noise-ratio (SNR) $\alpha_t^2/\sigma_t^2$ is strictly decreasing with respect to timestep $t$~\citep{kingma2021variational}.

Seminal works~\citep{kingma2021variational,song2020score} studied the underlying stochastic differential equation (SDE) and ordinary differential equation (ODE) theory of DPM.
The forward and reverse processes are as below for any $t\in[0,T]$:
\begin{align}
\mathrm d\mathbf x_t&=f_t\mathbf x_t\mathrm dt+g_t\mathrm d\mathbf w_t,\quad\mathbf x_0\sim q_0(\mathbf x_0|c),\label{eq:forward} \\
\mathrm d\mathbf x_t&=[f_t\mathbf x_t-g^2_t\nabla_{\mathbf x_t}\log q_t(\mathbf x_t|c)]\mathrm dt+g_t\mathrm d\bar{\mathbf w}_t,\label{eq:reverse}
\end{align}
where $\mathbf w_t,\bar{\mathbf w}_t$ are standard Wiener processes in forward and reverse time, respectively, and $f_t,g_t$ have closed-form expressions with respect to $\alpha_t,\sigma_t$.
The unknown $\nabla_{\mathbf x_t}\log q_t(\mathbf x_t|c)$ is referred to as the conditional score function.
Probability flow ODE (PF-ODE) from Fokker-Planck equation enjoys the identical marginal distribution at each $t$ as that of the SDE in \cref{eq:reverse}, \textit{i.e.},
\begin{align}\label{eq:probODE}
\frac{\mathrm d\mathbf x_t}{\mathrm dt}=f_t\mathbf x_t-\frac{1}{2}g^2_t\nabla_{\mathbf x_t}\log q_t(\mathbf x_t|c).
\end{align}

Technically, DPM implements sampling by solving the reverse SDE or ODE from $T$ to 0.
To this end, it introduces a neural network $\boldsymbol \epsilon_\theta(\mathbf x_t,c,t)$, namely the noise prediction model, to approximate the conditional score function from the given $\mathbf x_t$ and $c$ at timestep $t$, \textit{i.e.}, $\boldsymbol\epsilon_\theta(\mathbf x_t,c,t)=-\sigma_t\nabla_{\mathbf x_t}\log q_t(\mathbf x_t|c)$, where the parameter $\theta$ can be optimized by the objective below:
\begin{align}\label{eq:dpm_loss}
\mathbb E_{\mathbf x_0,\boldsymbol\epsilon,c,t}[\omega_t\|\boldsymbol\epsilon_\theta(\mathbf x_t,c,t) - \boldsymbol\epsilon\|_2^2],
\end{align}
where $\omega_t$ is the weighting function, $\boldsymbol\epsilon\sim\mathcal N(\mathbf 0,\mathbf I)$, $c\sim q(c)$, $\mathbf x_t=\alpha_t\mathbf x_0+\sigma_t\boldsymbol\epsilon$, and $t\sim\mathcal U[0,T]$.

For better condition fidelity, during denoising stage, CFG~\citep{ho2022classifierfree} turns to use a linear interpolation between conditional and unconditional score functions, \textit{i.e.},
\begin{align}\label{eq:cfg}
s_{t,\gamma}(\mathbf x,c)=\gamma\nabla_{\mathbf x}\log q_t(\mathbf x|c)+(1-\gamma)\nabla_{\mathbf x}\log q_t(\mathbf x).
\end{align}
Then PF-ODE can be rewritten as
\begin{align}\label{eq:pf_ode}
\frac{\mathrm d\mathbf x_t}{\mathrm dt}=f_t\mathbf x_t-\frac{1}{2}g^2_ts_{t,\gamma}(\mathbf x_t,c).
\end{align}

We further describe the CFG under the original DDIM theory.
Recall that DDIM turns out to formulate discrete non-Markovian forward diffusing process such that the reverse denoising process obeys the distribution with parameters $\{\delta_t\}_{t=0}^T$~\citep{song2020denoising}:
\vspace{-3pt}
\begin{align}
&\;q_\delta(\mathbf x_{t-1}|\mathbf x_t,\mathbf x_0,c)=q_\delta(\mathbf x_{t-1}|\mathbf x_t,\mathbf x_0) \\
\sim&\;\mathcal N\left(\alpha_{t-1}\mathbf x_0+\sqrt{\sigma_{t-1}^2-\delta_t^2}\cdot\frac{\mathbf x_t-\alpha_t\mathbf x_0}{\sigma_t},\delta_t^2\mathbf I\right).
\end{align}
Trainable generative process $p_\theta(\mathbf x_{t-1}|\mathbf x_t,c)$ is designed to leverage $q_\delta(\mathbf x_{t-1}|\mathbf x_t,\mathbf x_0,c)$ with a further designed denoised observation $\mathbf f_\theta^t$ with noise prediction model $\boldsymbol\epsilon_\theta$, \textit{i.e.},
\vspace{-12pt}
\begin{align}
\mathbf f_\theta^t(\mathbf x_t,c)&=\frac{1}{\alpha_t}(\mathbf x_t-\sigma_t\boldsymbol\epsilon_\theta(\mathbf x_t,c,t)), \\
\hspace{-2mm} p_\theta(\mathbf x_{t-1}|\mathbf x_t,c)&=
\begin{cases}
\mathlarger{q_\delta(\mathbf x_{t-1}|\mathbf x_t,\mathbf f_\theta^t(\mathbf x_t,c),c)}, & t>1, \\[2pt]
\mathlarger{\mathcal N(\mathbf f_\theta^t(\mathbf x_1),\sigma_1^2\mathbf I)}, & t=1.
\end{cases}
\end{align}

DDIM proves that for any $\{\delta_t\}_t$, score matching of non-Markovian process above is equivalent to native DPM.
With CFG weight $\gamma$, we generalize the theory as below:
\begin{align}
\hat{\boldsymbol\epsilon}_\theta(\mathbf x_t,c,t)&=\gamma\boldsymbol\epsilon_\theta(\mathbf x_t,c,t)+(1-\gamma)\boldsymbol\epsilon_\theta(\mathbf x_t,t), \\
\hat{\mathbf f}_{\theta}^t(\mathbf x_t,c)&=\frac{1}{\alpha_t}(\mathbf x_t-\sigma_t\hat{\boldsymbol\epsilon}_\theta(\mathbf x_t,c,t)),\label{eq:cfg_ddim1} \\
\hspace{-2mm} \hat p_{\theta}(\mathbf x_{t-1}|\mathbf x_t,c)&=
\begin{cases}
\mathlarger{q_\delta(\mathbf x_{t-1}|\mathbf x_t,\hat{\mathbf f}_{\theta}^t(\mathbf x_t,c),c)}, & t>1, \\[2pt]
\mathlarger{\mathcal N(\hat{\mathbf f}_{\theta}^t(\mathbf x_1,c),\sigma_1^2\mathbf I)}, & t=1.
\end{cases}\label{eq:cfg_ddim2}
\end{align}
Native DDIM theory still holds since $q_\delta(\mathbf x_{t-1}|\mathbf x_t,\mathbf x_0,c)=q_\delta(\mathbf x_{t-1}|\mathbf x_t,\mathbf x_0)$, \textit{i.e.}, with the definition
\begin{align}
J_{\delta,\gamma}(\boldsymbol\epsilon_\theta)=\mathbb E_{q_\delta(\mathbf x_{0:T}|c)}\left[\log\frac{q_\delta(\mathbf x_{1:T}|\mathbf x_0,c)}{\hat p_{\theta}(\mathbf x_{0:T}|c)}\right],
\end{align}
we have the following theorem.
Proof is in \Cref{subsec:proof.2}.

\begin{theorem}\label{thm:cfg}
For any $\{\delta_t\}_t$ and $\gamma>1$, $J_{\delta,\gamma}$ is equivalent to native DPM under CFG up to a constant.
However, denoising with CFG is not a reciprocal of the original diffusion process with Gaussian noise due to nonzero expectation of unconditional score function $\mathbb E_{q_t(\mathbf x_t|c)}[\nabla_{\mathbf x_t}\log q_t(\mathbf x_t)]$.
\end{theorem}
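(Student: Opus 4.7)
The plan is to split the theorem into its two distinct claims and tackle them separately, since the first is essentially an adaptation of DDIM's original variational argument while the second is a standalone observation about score expectations that follows from standard integration by parts.

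For the equivalence claim, I would follow DDIM's derivation verbatim, exploiting the key fact already noted in the excerpt that $q_\delta(\mathbf{x}_{t-1}|\mathbf{x}_t,\mathbf{x}_0,c)=q_\delta(\mathbf{x}_{t-1}|\mathbf{x}_t,\mathbf{x}_0)$ is independent of the condition. Expanding $J_{\delta,\gamma}$ as a telescoping sum of per-step KL divergences plus entropy terms, each per-step KL reduces to $\mathrm{KL}\bigl(q_\delta(\mathbf{x}_{t-1}|\mathbf{x}_t,\mathbf{x}_0)\,\|\,q_\delta(\mathbf{x}_{t-1}|\mathbf{x}_t,\hat{\mathbf{f}}_\theta^t(\mathbf{x}_t,c))\bigr)$. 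Because both Gaussians share the covariance $\delta_t^2\mathbf{I}$, this KL collapses to a weighted squared distance between $\mathbf{x}_0$ and $\hat{\mathbf{f}}_\theta^t(\mathbf{x}_t,c)$, which by \cref{eq:cfg_ddim1} is proportional to $\|\boldsymbol\epsilon-\hat{\boldsymbol\epsilon}_\theta(\mathbf{x}_t,c,t)\|_2^2$. Summing over $t$ with appropriate timestep weights recovers \cref{eq:dpm_loss} with $\boldsymbol\epsilon_\theta$ replaced by $\hat{\boldsymbol\epsilon}_\theta$, modulo constants independent of $\theta$. This is the sought equivalence.

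For the non-reciprocity claim, I would start from the observation that if CFG-guided denoising truly reversed a Gaussian forward process with conditional marginals $q_t(\cdot|c)$, then by the identity $\mathbb{E}_{p}[\nabla\log p(\mathbf{x})]=0$ the expected score under that marginal would have to vanish at every timestep. Computing instead the expectation of the CFG score gives
\begin{align}
\mathbb{E}_{q_t(\mathbf{x}_t|c)}[s_{t,\gamma}(\mathbf{x}_t,c)]=(1-\gamma)\,\mathbb{E}_{q_t(\mathbf{x}_t|c)}[\nabla_{\mathbf{x}_t}\log q_t(\mathbf{x}_t)],
\end{align}
since the conditional-score term integrates to zero against its own density, while the unconditional-score term is evaluated against a \emph{different} density $q_t(\cdot|c)$. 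To show this residual is nonzero rather than merely not guaranteed to be zero, I would apply integration by parts to rewrite it as a cross-information-type quantity $-\mathbb{E}_{q_t(\cdot|c)}[\nabla\log\tfrac{q_t(\mathbf{x}_t|c)}{q_t(\mathbf{x}_t)}]$ that generically fails to vanish whenever the condition $c$ carries information about $\mathbf{x}_t$.

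The main obstacle is making this generic nonvanishing fully rigorous rather than heuristic, since in principle the cross-expectation could cancel for pathological choices of $q_t(\mathbf{x}_t|c)$ and $q_t(\mathbf{x}_t)$. A clean way to secure a strict inequality without hand-waving is to invoke the toy Gaussian setup of \cref{fig:toy_model}, where a direct calculation yields a closed-form nonzero shift; this is precisely what the forthcoming \Cref{thm:gs_shift} will quantify in detail, so the existence of at least one $(c,t)$ pair with strictly nonzero expected score can be read off from that calculation. Combined with the observation that any valid reverse of a Gaussian forward must have zero expected score under its own marginal at every $t$, this yields the contradiction and completes the second claim.
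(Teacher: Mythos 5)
Your treatment of the second claim matches the paper's: the paper likewise reduces $\mathbb E_{q_t(\mathbf x_t|c)}[s_{t,\gamma}(\mathbf x_t,c)]$ to $(1-\gamma)\mathbb E_{q_t(\mathbf x_t|c)}[\nabla_{\mathbf x_t}\log q_t(\mathbf x_t)]$, evaluates this via the noise-prediction identity as $\tfrac{\gamma-1}{\sigma_t^2}\alpha_t\bigl(\mathbb E_{q_0(\mathbf x_0|c)}[\mathbf x_0]-\mathbb E_{q_0(\mathbf x_0,c)}[\mathbf x_0]\bigr)$, and contrasts it with the zero expectation of the true conditional score under its own marginal. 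Your extra step of certifying strict nonvanishing on the Gaussian toy model (deferring to the computation behind \Cref{thm:gs_shift}) is a reasonable strengthening; the paper itself only concludes the expectation is ``not guaranteed'' to vanish, so on this half you are, if anything, slightly more careful.

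The gap is in the first claim. You stop at the statement that each per-step KL is proportional to $\|\boldsymbol\epsilon-\hat{\boldsymbol\epsilon}_\theta(\mathbf x_t,c,t)\|_2^2$, i.e.\ \cref{eq:dpm_loss} with $\boldsymbol\epsilon_\theta$ replaced by the interpolated predictor. That is only an intermediate point, not the claimed equivalence: the ``native DPM under CFG'' objective is the pair of per-branch noise-matching losses (conditional and unconditional), and a loss on the mixture $\gamma\boldsymbol\epsilon_\theta(\mathbf x_t,c,t)+(1-\gamma)\boldsymbol\epsilon_\theta(\mathbf x_t,t)$ is not that objective on its face because of the coupling term. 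The substantive step in the paper's proof is to expand
\begin{align}
\|\boldsymbol\epsilon-\hat{\boldsymbol\epsilon}_\theta\|_2^2&=\gamma^2\|\boldsymbol\epsilon-\boldsymbol\epsilon_\theta(\mathbf x_t,c,t)\|_2^2+(1-\gamma)^2\|\boldsymbol\epsilon-\boldsymbol\epsilon_\theta(\mathbf x_t,t)\|_2^2\nonumber\\
&\qquad+2\gamma(1-\gamma)\langle\boldsymbol\epsilon-\boldsymbol\epsilon_\theta(\mathbf x_t,c,t),\boldsymbol\epsilon-\boldsymbol\epsilon_\theta(\mathbf x_t,t)\rangle
\end{align}
and then kill the cross term in expectation by using $\boldsymbol\epsilon_\theta(\mathbf x_t,c,t)=\mathbb E_{q(\boldsymbol\epsilon|\mathbf x_t,c)}[\boldsymbol\epsilon|\mathbf x_t]$ together with the tower-property/inner-product identity (\Cref{lem:1}); only after this does $J_{\delta,\gamma}$ collapse, up to constants, into $\gamma^2$ times the conditional score-matching loss plus $(1-\gamma)^2$ times the unconditional one, which is the equivalence the theorem asserts. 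Your sketch neither performs this decomposition nor invokes the Bayes-optimality of $\boldsymbol\epsilon_\theta$ that makes the cancellation legitimate, so as written the first claim is not yet proved.
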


\begin{remark}\label{rem:cfg}
$\boldsymbol\epsilon_\theta(\mathbf x_t,c,t)$ and $\boldsymbol\epsilon_\theta(\mathbf x_t,t)$ are proportional to $\nabla_{\mathbf x_t}\log q_t(\mathbf x_t|c)$ and $\nabla_{\mathbf x_t}\log q_t(\mathbf x_t)$ with coefficients being each minus standard deviation respectively, and empirically we use the same fixed variance for both $q(\mathbf x_{t-1}|\mathbf x_t,\mathbf x_0,c)$ and $q(\mathbf x_{t-1}|\mathbf x_t,\mathbf x_0)$.
Therefore, \Cref{thm:cfg} is consistent with the original CFG using score functions in \cref{eq:cfg}.
\end{remark}

\subsection{Misconceptions on Expectation Shift}\label{subsec:method.2}

CFG is designed to concentrate on better exemplars for each denoising step by sharpening the gamma-powered distribution as below~\citep{bradley2024classifierfreeguidancepredictorcorrector}:
\begin{align}\label{eq:gamma_distrib}
q_{t,\gamma}(\mathbf x|c)=q_t(\mathbf x|c)^\gamma q_t(\mathbf x)^{1-\gamma}.
\end{align}
We first generalize the counterexample in~\citep{bradley2024classifierfreeguidancepredictorcorrector} to confirm the expectation shift phenomenon.
For VE-SDE with deterministic sampling recipe, we consider the 1-dimensional distribution with $q_0(\mathbf x_0|c)\sim\mathcal N(c,1)$, $q(c)\sim\mathcal N(0,1)$, $q_0(\mathbf x_0)\sim\mathcal N(0,2)$.
Then we can formulate the forward process and score functions as below:
\begin{align}
\hspace{-4mm}q_t(\mathbf x_t|c)\sim\mathcal N(c,1+t),&\;\nabla_{\mathbf x_t}\log q_t(\mathbf x_t|c)=-\frac{\mathbf x_t-c}{1+t}, \\
\hspace{-4mm}q_t(\mathbf x_t)\sim\mathcal N(0,2+t),&\;\nabla_{\mathbf x_t}\log q_t(\mathbf x_t)=-\frac{\mathbf x_t}{2+t}.
\end{align}

We state the theorem below describing the expectation shift.
Proof is addressed in \Cref{subsec:proof.1}.

\begin{theorem}\label{thm:gs_shift}
Denote by $q_{0,\gamma}^{\mathrm{deter}}(\mathbf x_0|c)$ the conditional distribution by solving PF-ODE in \cref{eq:pf_ode} with $\gamma>1$.
Then $q_{0,\gamma}^{\mathrm{deter}}(\mathbf x_0|c)$ follows the closed-form expression as below.
\begin{align}
q_{0,\gamma}^{\mathrm{deter}}(\mathbf x_0|c)\sim\mathcal N\left(c\phi(\gamma,T),2^{1-\gamma}\psi(\gamma,T)\right),
\end{align}
in which
\begin{align}
\phi(\gamma,T)&=\frac{2^{\frac{1-\gamma}{2}}}{(T+1)^{\frac{\gamma}{2}}(T+2)^{\frac{1-\gamma}{2}}} \\
&\qquad+\frac{\gamma}{2^{\frac{\gamma+1}{2}}}\int_0^T\frac{(s+1)^{-\frac{\gamma+2}{2}}}{(s+2)^{\frac{1-\gamma}{2}}}\mathrm ds, \\
\psi(\gamma,T)&=\frac{T+1}{(T+1)^\gamma(T+2)^{1-\gamma}}.
\end{align}

\begin{figure*}[t]
\centering
\includegraphics[width=0.9\textwidth]{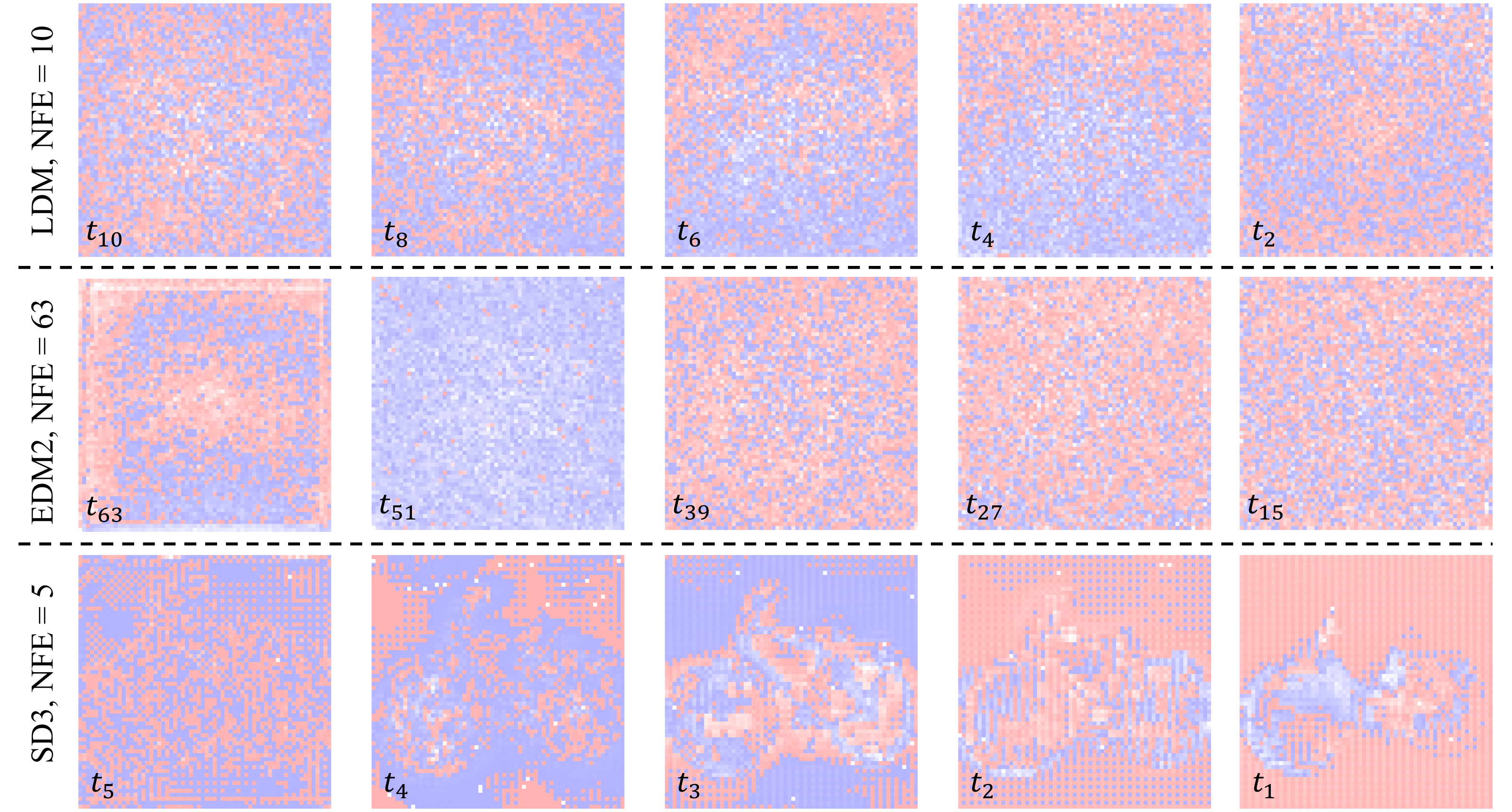}
\vspace{-5pt}
\caption{
    \textbf{Visualization} of the lookup table on LDM~\citep{rombach2022high}, EDM2~\citep{Karras2024edm2}, and SD3~\citep{esser2024sd3}, each of which consists of the expectation ratio $\mathbb E_{\mathbf x_t}[\boldsymbol\epsilon_\theta(\mathbf x_t,c,t)]/\mathbb E_{\mathbf x_t}[\boldsymbol\epsilon_\theta(\mathbf x_t,t)]$.
    Each pixel represents the scale of the pixel-wise ratio, \textit{i.e.}, color \textcolor{red}{\textbf{red}} implies that ratio is greater than one, while color \textcolor{blue}{\textbf{blue}} stands for ratio smaller than one.
    The darker the color is, the farther the ratio appears away from one.
    We report in each row the expectation ratios on five timesteps uniformly sampled from the whole trajectory, under different DPMs and NFEs.
    It is noteworthy that expectation ratios at the same timestep vary largely by different pixels, and there is no general pattern along with timesteps or pixels.
}
\vspace{-10pt}
\label{fig:coeff}
\end{figure*}

Specifically, when $T\rightarrow+\infty$, denote by $\phi(\gamma)$ with
\begin{align}
\phi(\gamma)=\lim_{T\rightarrow+\infty}\phi(\gamma,T),
\end{align}
we have $\phi(\gamma)\geqslant\gamma\frac{7}{15}\left(\frac{10}{7}\right)^{\frac{5-\gamma}{2}}$ for $\gamma\in[1,3]$, $\phi(1)=1$, $\phi(3)=2$, $\phi(\gamma)\geqslant2$ for all $\gamma>3$, and
\begin{align}
q_{0,\gamma}^{\mathrm{deter}}(\mathbf x_0|c)\sim\mathcal N(c\phi(\gamma),2^{1-\gamma}).
\end{align}
\end{theorem}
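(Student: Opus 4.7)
I would start by observing that the toy setup makes both score functions affine in $\mathbf{x}_t$; with VE parameters $f_t=0,\,g_t=1$ (for $\sigma_t^2=t$), substituting into the PF-ODE of \cref{eq:pf_ode} yields the scalar linear ODE
\begin{equation*}
\frac{\mathrm d\mathbf{x}_t}{\mathrm dt}=A(t)\mathbf{x}_t+B(t),\quad A(t)=\tfrac{\gamma}{2(1+t)}+\tfrac{1-\gamma}{2(2+t)},\quad B(t)=-\tfrac{\gamma c}{2(1+t)}.
\end{equation*}
Running the reverse flow from the conditional Gaussian $\mathbf{x}_T\sim q_T(\mathbf{x}_T|c)=\mathcal N(c,1+T)$, linearity preserves Gaussianity at every $t$, so $q_{0,\gamma}^{\mathrm{deter}}(\mathbf{x}_0|c)$ is Gaussian and the plan reduces to computing its mean $\mu(0)$ and variance $V(0)$.

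Next I would integrate the linear ODEs for the mean and variance, $\mu'=A\mu+B$ and $V'=2AV$. Since $\int A\,dt=\tfrac{\gamma}{2}\log(1+t)+\tfrac{1-\gamma}{2}\log(2+t)+\mathrm{const}$ is elementary, the integrating factor $\Phi(t_1,t_2):=\exp\int_{t_1}^{t_2}A(u)\,du$ is an explicit product of powers of $1+t$ and $2+t$. The variance equation integrates immediately to $V(0)=(1+T)\Phi(T,0)^2$, which after simplification matches $2^{1-\gamma}\psi(\gamma,T)$. For the mean, variation of parameters yields $\mu(0)=c\,\Phi(T,0)+\int_T^0\Phi(s,0)B(s)\,ds$; the first term matches the first summand of $\phi(\gamma,T)$, and evaluating the second with $B(s)=-\gamma c/[2(1+s)]$ matches its integral summand, so $\mu(0)=c\,\phi(\gamma,T)$.

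For the $T\to\infty$ limit I would note that $\psi(\gamma,T)=((T+2)/(T+1))^{\gamma-1}\to 1$, collapsing the variance to $2^{1-\gamma}$, while $\Phi(T,0)=O(T^{-1/2})\to 0$ leaves $\phi(\gamma)=\tfrac{\gamma}{2^{(\gamma+1)/2}}\int_0^\infty(s+1)^{-(\gamma+2)/2}(s+2)^{(\gamma-1)/2}\,ds$. The substitution $v=1/(s+1)$ reduces this to $\phi(\gamma)=\tfrac{\gamma}{2^{(\gamma+1)/2}}\int_0^1 v^{-1/2}(1+v)^{(\gamma-1)/2}\,dv$, which at $\gamma=1,3$ evaluates elementarily to $1$ and $2$. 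For the inequality on $[1,3]$, the exponent $(\gamma-1)/2\in[0,1]$ makes $(1+v)^{(\gamma-1)/2}$ concave in $v$, and I would replace it by its chord on $[0,1]$ (tight at both $\gamma=1$ and $\gamma=3$) and integrate against $v^{-1/2}$ to obtain $\phi(\gamma)\ge\tfrac{\gamma}{3}[1+2^{(3-\gamma)/2}]$; a direct comparison confirms this dominates the claimed $\gamma\tfrac{7}{15}(10/7)^{(5-\gamma)/2}$. For $\gamma>3$, integration by parts on the reduced integral yields the recursion $\phi(\gamma)=1+\tfrac{\gamma-1}{2(\gamma-2)}\phi(\gamma-2)$, and induction from the interval bound propagates $\phi(\gamma)\ge 2$ to all $\gamma>3$.

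The hard part will be the $[1,3]$ inequality, since the naive $(1+v)^{(\gamma-1)/2}\ge 1$ only gives $\phi(\gamma)\ge\gamma\,2^{(1-\gamma)/2}$, which is too weak at $\gamma=3$; concavity in $v$ and the endpoint-tight chord are what restore tightness. The extension $\phi(\gamma)\ge 2$ for $\gamma>3$ is also delicate because $\phi(\gamma)\to 2^+$ as $\gamma\to\infty$ (an asymptotic of the peaked integrand gives $\phi(\gamma)\sim 2\gamma/(\gamma-1)$), so any lower bound for $\gamma>3$ must be tight in the limit; the recursion sidesteps this by propagating the inequality from the starting interval rather than bounding the integrand pointwise.
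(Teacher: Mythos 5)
Your closed-form derivation and the $T\to\infty$ limit are sound and essentially the paper's own route: variation of constants for the linear PF-ODE, Gaussian push-forward of $\mathbf x_T\sim\mathcal N(c,T+1)$ giving $\mu(0)=c\,\phi(\gamma,T)$ and $V(0)=2^{1-\gamma}\psi(\gamma,T)$, the values $\phi(1)=1$, $\phi(3)=2$, and the recursion $\phi(\gamma)=1+\frac{\gamma-1}{2(\gamma-2)}\phi(\gamma-2)$ with induction for $\gamma>3$ (the paper derives the same recursion by integration by parts on $I(\gamma)$). The gap is in the $[1,3]$ inequality. Your chord bound $\phi(\gamma)\ge\frac{\gamma}{3}\bigl(1+2^{(3-\gamma)/2}\bigr)$ is a correct lower bound, but your final assertion that it dominates the stated bound $\gamma\frac{7}{15}\bigl(\frac{10}{7}\bigr)^{(5-\gamma)/2}$ on all of $[1,3]$ is false near $\gamma=3$: both expressions equal $2$ at $\gamma=3$, but the chord bound has the larger slope there ($\frac{2}{3}-\frac{\ln 2}{2}\approx 0.320$ versus $\frac{2}{3}\bigl(1-\frac{3}{2}\ln\frac{10}{7}\bigr)\approx 0.310$), so it lies strictly \emph{below} the stated bound for $\gamma$ just under $3$; numerically, at $\gamma=2.8$ the chord bound is $\approx 1.9337$ while the theorem's bound is $\approx 1.9345$, and the two curves only cross near $\gamma\approx 2.66$. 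So your argument proves an incomparable lower bound (tight at both $\gamma=1$ and $\gamma=3$, but weaker on roughly $\gamma\in(2.66,3)$) and does not establish the inequality actually claimed in the theorem on that subinterval.

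The repair is exactly the paper's step, and it drops into your setup unchanged: the stated constant is the geometric interpolation of the exactly computable integrals. With $I(\gamma)=\int_0^{\infty}(s+1)^{-\frac{\gamma+2}{2}}(s+2)^{\frac{\gamma-1}{2}}\mathrm ds$, log-convexity of $I$ in $\gamma$ (H\"older with $p=\frac{2}{5-\gamma}$; the paper phrases it as Cauchy--Schwarz) gives $I(3)\le I(\gamma)^{p}I(5)^{1-p}$, hence $I(\gamma)\ge I(3)^{\frac{5-\gamma}{2}}I(5)^{\frac{\gamma-3}{2}}$ with $I(3)=\frac{8}{3}$, $I(5)=\frac{56}{15}$, which yields precisely $\phi(\gamma)\ge\gamma\frac{7}{15}\bigl(\frac{10}{7}\bigr)^{\frac{5-\gamma}{2}}$ on $[1,3]$; the same interpolation works verbatim on your reduced integral $\int_0^1 v^{-1/2}(1+v)^{(\gamma-1)/2}\mathrm dv$. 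For $\gamma>3$, the paper runs a second interpolation between $I(1)=2$ and $I(3)$ to get $\phi(\gamma)\ge\gamma\bigl(\frac{2}{3}\bigr)^{\frac{\gamma-1}{2}}\ge 2$ on $[3,5]$ before inducting; your alternative of feeding the $[1,3]$ bound into the recursion can also be made to work (one must check $(\gamma-1)\bigl(1+2^{(5-\gamma)/2}\bigr)\ge 6$ on $(3,5]$, which holds), but that verification needs to be carried out rather than asserted.
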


However, note that the ground-truth conditional distribution $q_0(\mathbf x_0|c)\sim\mathcal N(c,1)$, indicating that the ground-truth expectation is equal to $c$.
That is to say, denoising with CFG achieves at least twice as large expectation as the ground-truth one.
\cref{fig:toy_model} clearly describes the phenomenon.

\subsection{Rectified Classifier-Free Guidance}\label{subsec:method.3}

Recall that the constraint of the two coefficients with summation one disables the compatibility with diffusion theory and indicates expectation shift.
\Cref{thm:gs_shift} quantitatively describes the expectation shift, claiming that the two coefficients of conditional and unconditional score functions in \cref{eq:cfg} dominate both the expectation and variance of $q_{0,\gamma}^{\mathrm{deter}}(\mathbf x_0|c)$.
To this end, we propose to rectify CFG with relaxation on the guidance coefficients, \textit{i.e.},
\begin{align}\label{eq:generalized_cfg}
s_{t,\gamma_1,\gamma_0}(\mathbf x,c)&=\gamma_1\otimes\nabla_{\mathbf x}\log q_t(\mathbf x|c) \\
&\qquad+\gamma_0\otimes\nabla_{\mathbf x}\log q_t(\mathbf x),
\end{align}
in which $\gamma_1$, $\gamma_0\in\mathbb R^D$ are functions with respect to condition $c$ and timestep $t$, and $\otimes$ indicates element-wise product.
Denote by $q_{0,\gamma_1,\gamma_0}^{\mathrm{deter}}(\mathbf x_0|c)$ the attached conditional distribution following PF-ODE in \cref{eq:pf_ode} with $s_{t,\gamma_1,\gamma_0}(\mathbf x,c)$.

To make guided sampling compatible with the diffusion theory and annihilate expectation shift, it suffices to choose more appropriate $\gamma_1$ and $\gamma_0$ according to input condition $c$ and timestep $t$.
Intuitively, we need the constraint such that:
\begin{itemize}
\item Each component of $\gamma$ is larger than one for strengthened conditional fidelity, \textit{i.e.}, $\gamma_{1,i}>1$,
\item Denoising with PF-ODE and \cref{eq:generalized_cfg} is theoretically the reciprocal of forward process, thus $q_{0,\gamma_1,\gamma_0}^{\mathrm{deter}}(\mathbf x_0|c)$ enjoys the same expectation as the ground-truth $q_0(\mathbf x_0|c)$,
\item $q_{0,\gamma_1,\gamma_0}^{\mathrm{deter}}(\mathbf x_0|c)$ enjoys smaller or the same variance as the ground-truth $q_0(\mathbf x_0|c)$ for sharper distribution and thus concentrated better exemplars.
\end{itemize}

In the sequel, we omit $\otimes$ for simplicity.
We first focus on the compatibility with the diffusion theory.
We have claimed in \Cref{thm:cfg} that CFG cannot satisfy the diffusion theory due to nonzero $\mathbb E_{q_t(\mathbf x_t|c)}[\nabla_{\mathbf x_t}\log q_t(\mathbf x_t)]$.
To this end, it suffices to annihilate the expectation shift as below:
\begin{align}\label{eq:annihilation}
\mathbb E_{q_t(\mathbf x_t|c)}[s_{t,\gamma_1,\gamma_0}(\mathbf x,c)]=\mathbf0.
\end{align}
To confirm the feasibility and precisely describe the expectation of $q_{0,\gamma_1,\gamma_0}^{\mathrm{deter}}(\mathbf x_0|c)$, resembling \cref{eq:cfg_ddim1,eq:cfg_ddim2} we write denoised observation and denoising process as below:
\vspace{-12pt}
\begin{align}
\hat{\boldsymbol\epsilon}_\theta(\mathbf x_t,c,t)&=\gamma_1\boldsymbol\epsilon_\theta(\mathbf x_t,c,t)+\gamma_0\boldsymbol\epsilon_\theta(\mathbf x_t,t), \\
\hat{\mathbf f}_{\theta}^t(\mathbf x_t,c)&=\frac{1}{\alpha_t}(\mathbf x_t-\sigma_t\hat{\boldsymbol\epsilon}_\theta(\mathbf x_t,c,t)),\label{eq:generalized_cfg_ddim1} \\
\hspace{-2mm} \hat p_{\theta}(\mathbf x_{t-1}|\mathbf x_t,c)&=
\begin{cases}
\mathlarger{q_\delta(\mathbf x_{t-1}|\mathbf x_t,\hat{\mathbf f}_{\theta}^t(\mathbf x_t,c),c)}, & t>1, \\[2pt]
\mathlarger{\mathcal N(\hat{\mathbf f}_{\theta}^t(\mathbf x_1,c),\sigma_1^2\mathbf I)}, & t=1.
\end{cases}\label{eq:generalized_cfg_ddim2}
\end{align}
We have the theorem below, proof is in \Cref{subsec:proof.3}.

\begin{theorem}\label{thm:generalized_cfg_expectation}
Let $\mathbf x_t\sim q_t(\mathbf x_t|c)$, $\tilde{\mathbf x}_t\sim\hat p_{\theta}(\tilde{\mathbf x}_t|c)$ induced from DDIM sampler in \cref{eq:generalized_cfg_ddim2}.
Assume that all $\delta_t=0$, denote by $\Delta_t$ the difference between expectation of $\mathbf x_t$ and $\tilde{\mathbf x}_t$, by $\boldsymbol\epsilon_{\gamma_1,\gamma_0}^{c,t}$ the interpolation between score functions, \textit{i.e.},
\vspace{-12pt}
\begin{align}
\Delta_t&=\mathbb E_{q_t(\mathbf x_t|c)}[\mathbf x_t]-\mathbb E_{\hat p_{\theta}(\tilde{\mathbf x}_t|c)}[\tilde{\mathbf x}_t], \\
\boldsymbol\epsilon_{\gamma_1,\gamma_0}^{c,t}(\mathbf x)&=(\gamma_1-1)\boldsymbol\epsilon_\theta(\mathbf x,c,t)+\gamma_0\boldsymbol\epsilon_\theta(\mathbf x,t).
\end{align}
Then we have the following recursive equality:
\begin{align}
\hspace{-3mm} \Delta_{t-1}=\frac{\sigma_{t-1}}{\sigma_t}\Delta_t-(\sigma_{t-1}-\frac{\alpha_{t-1}}{\alpha_t}\sigma_t)\mathbb E_{\tilde{\mathbf x}_t}[\boldsymbol\epsilon_{\gamma_1,\gamma_0}^{c,t}(\tilde{\mathbf x}_t)].
\end{align}
Specifically, when $\Delta_t=0$, we have:
\begin{align}\label{eq:expectation_diff}
\Delta_{t-1}&=-(\sigma_{t-1}-\frac{\alpha_{t-1}}{\alpha_t}\sigma_t)\mathbb E_{\mathbf x_t}[\boldsymbol\epsilon_{\gamma_1,\gamma_0}^{c,t}(\mathbf x_t)].
\end{align}
\end{theorem}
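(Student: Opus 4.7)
The plan is to unroll one deterministic DDIM step, take expectations, isolate the contribution from the rectified guidance perturbation, and compare with the forward-process mean to obtain a recursion in $\Delta_t$.

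First, with $\delta_t=0$ the update in \cref{eq:generalized_cfg_ddim2} collapses to the closed form
\begin{align}
\tilde{\mathbf x}_{t-1}=\frac{\alpha_{t-1}}{\alpha_t}\tilde{\mathbf x}_t+\left(\sigma_{t-1}-\frac{\alpha_{t-1}\sigma_t}{\alpha_t}\right)\hat{\boldsymbol\epsilon}_\theta(\tilde{\mathbf x}_t,c,t),
\end{align}
obtained by substituting $\hat{\mathbf f}_\theta^t=(\tilde{\mathbf x}_t-\sigma_t\hat{\boldsymbol\epsilon}_\theta)/\alpha_t$ into $\alpha_{t-1}\hat{\mathbf f}_\theta^t+(\sigma_{t-1}/\sigma_t)(\tilde{\mathbf x}_t-\alpha_t\hat{\mathbf f}_\theta^t)$ and collecting terms. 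I would then rewrite the rectified noise as $\hat{\boldsymbol\epsilon}_\theta=\boldsymbol\epsilon_\theta(\cdot,c,t)+\boldsymbol\epsilon_{\gamma_1,\gamma_0}^{c,t}(\cdot)$ by the definition of the perturbation, and take the expectation under $\tilde{\mathbf x}_t\sim\hat p_\theta(\cdot|c)$. Letting $C_t=\sigma_{t-1}-\alpha_{t-1}\sigma_t/\alpha_t$ this splits $\mathbb E[\tilde{\mathbf x}_{t-1}]$ into a linear-in-$\tilde{\mathbf x}_t$ part, a ``conditional-only'' part $C_t\mathbb E_{\tilde{\mathbf x}_t}[\boldsymbol\epsilon_\theta(\tilde{\mathbf x}_t,c,t)]$, and the guidance part $C_t\mathbb E_{\tilde{\mathbf x}_t}[\boldsymbol\epsilon_{\gamma_1,\gamma_0}^{c,t}(\tilde{\mathbf x}_t)]$, which is what eventually populates the second summand of the recursion.

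Second, I would invoke the Tweedie-type identity $\boldsymbol\epsilon_\theta(\mathbf x,c,t)=(\mathbf x-\alpha_t\mathbb E[\mathbf x_0\mid\mathbf x,c])/\sigma_t$ enforced by score matching. In the Gaussian-marginal regime of the forward transition \cref{eq:tran} (which is exactly where \Cref{thm:gs_shift} operates), this reduces pointwise to $(\mathbf x-\mathbb E[\mathbf x_t])/\sigma_t$, hence $\mathbb E_{\tilde{\mathbf x}_t}[\boldsymbol\epsilon_\theta(\tilde{\mathbf x}_t,c,t)]=(\mathbb E[\tilde{\mathbf x}_t]-\mathbb E[\mathbf x_t])/\sigma_t=-\Delta_t/\sigma_t$. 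Substituting and subtracting from $\mathbb E[\mathbf x_{t-1}]=(\alpha_{t-1}/\alpha_t)\mathbb E[\mathbf x_t]$, which is immediate from integrating \cref{eq:tran}, the $(\alpha_{t-1}/\alpha_t)\Delta_t$ contribution from the linear part combines with $C_t\Delta_t/\sigma_t=(\sigma_{t-1}/\sigma_t-\alpha_{t-1}/\alpha_t)\Delta_t$ to leave precisely the claimed coefficient $\sigma_{t-1}/\sigma_t$ in front of $\Delta_t$, while the guidance part contributes $-C_t\mathbb E_{\tilde{\mathbf x}_t}[\boldsymbol\epsilon_{\gamma_1,\gamma_0}^{c,t}(\tilde{\mathbf x}_t)]$. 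The $\Delta_t=0$ specialization is then a corollary: with no first-moment mismatch one may replace $\mathbb E_{\tilde{\mathbf x}_t}$ by $\mathbb E_{\mathbf x_t}$ in the surviving term, yielding \cref{eq:expectation_diff}.

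The principal obstacle is justifying the identity $\mathbb E_{\tilde{\mathbf x}_t}[\boldsymbol\epsilon_\theta(\tilde{\mathbf x}_t,c,t)]=-\Delta_t/\sigma_t$. It is exact whenever $q_t(\mathbf x_t\mid c)$ is Gaussian, so that the trained $\boldsymbol\epsilon_\theta$ is affine in $\mathbf x$; this already covers the toy regime of \Cref{thm:gs_shift}, but outside of it one must either state a Gaussian-approximation assumption on the forward marginals or appeal to a first-order expansion about the conditional mean. Everything else is algebraic bookkeeping combining the DDIM closed form with the Gaussian forward transition.
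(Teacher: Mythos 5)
Your skeleton coincides with the paper's: write the $\delta_t=0$ DDIM step in closed form, split $\hat{\boldsymbol\epsilon}_\theta$ into the conditional prediction plus the perturbation $\boldsymbol\epsilon_{\gamma_1,\gamma_0}^{c,t}$, take expectations, and combine $\alpha_{t-1}/\alpha_t$ with $(\sigma_{t-1}-\tfrac{\alpha_{t-1}}{\alpha_t}\sigma_t)/\sigma_t$ to get the coefficient $\sigma_{t-1}/\sigma_t$. The genuine divergence, and the gap, is how you justify the crucial identity $\mathbb E_{\tilde{\mathbf x}_t}[\boldsymbol\epsilon_\theta(\tilde{\mathbf x}_t,c,t)]-\mathbb E_{\mathbf x_t}[\boldsymbol\epsilon_\theta(\mathbf x_t,c,t)]=-\Delta_t/\sigma_t$. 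You obtain it by requiring $\boldsymbol\epsilon_\theta(\cdot,c,t)$ to be affine pointwise, i.e.\ Gaussian marginals $q_t(\mathbf x_t|c)$, and you correctly flag that this only covers the toy regime; but the theorem is stated for general data distributions, so as written your argument does not prove it. The paper's proof avoids this assumption: it uses only the posterior-mean representation $\boldsymbol\epsilon_\theta(\mathbf x,c,t)=\mathbb E_{q(\mathbf x_0|\mathbf x,c)}\left[(\mathbf x-\alpha_t\mathbf x_0)/\sigma_t\,\middle|\,\mathbf x\right]$ together with the tower property (\Cref{lem:1}), so that after averaging only first moments survive, giving $\mathbb E[\boldsymbol\epsilon_\theta(\cdot,c,t)]=\frac{1}{\sigma_t}\mathbb E[\mathbf x]-\frac{\alpha_t}{\sigma_t}\mathbb E[\mathbf x_0]$ both under $q_t(\mathbf x_t|c)$ and under the sampler marginal $\hat p_\theta(\tilde{\mathbf x}_t|c)$ (the latter written as a mixture over $(\mathbf x_0,\mathbf x_T)$), see \cref{eq:cond_exp_gt,eq:cond_exp_ddim}; subtracting yields $-\Delta_t/\sigma_t$ with no Gaussianity or linearization. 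Replacing your pointwise-affine step by this expectation-level argument is exactly what is needed to reach the stated generality.

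A second, smaller gap is the specialization $\Delta_t=0$. Matching first moments of $\tilde{\mathbf x}_t$ and $\mathbf x_t$ does not by itself allow replacing $\mathbb E_{\tilde{\mathbf x}_t}$ by $\mathbb E_{\mathbf x_t}$ inside $\boldsymbol\epsilon_{\gamma_1,\gamma_0}^{c,t}$, since that term contains the nonlinear unconditional predictor $\boldsymbol\epsilon_\theta(\cdot,t)$ as well. The paper closes this with \Cref{lem:2}, which shows $\mathbb E[\boldsymbol\epsilon_\theta(\cdot,t)]$ is the same affine function of the first moment (with $\mathbb E_{c,\mathbf x_0}[\mathbf x_0]$ fixed), so $\Delta_t=0$ forces both the conditional and unconditional expected predictions to agree across the two distributions, and only then can the swap be made. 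Under your Gaussian assumption this is automatic, but in general you need this extra lemma-level step.
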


\Cref{thm:generalized_cfg_expectation} studies the difference between expectation of denoising with \cref{eq:generalized_cfg} and the ground-truth.
Note that
\begin{align}
\mathbb E_{\mathbf x_t}[\boldsymbol\epsilon_\theta(\mathbf x_t,c,t)]=\mathbb E_{\mathbf x_t}[\mathbb E[\boldsymbol\epsilon|\mathbf x_t]]=\mathbb E_{\mathbf x_t}[\boldsymbol\epsilon]=\mathbf0,
\end{align}
therefore we have
\begin{align}
&\;\mathbb E_{\mathbf x_t}[\boldsymbol\epsilon_{\gamma_1,\gamma_0}^{c,t}(\mathbf x_t)] \\
=&\;\mathbb E_{\mathbf x_t}[(\gamma_1-1)\boldsymbol\epsilon_\theta(\mathbf x_t,c,t)+\gamma_0\boldsymbol\epsilon_\theta(\mathbf x_t,t)] \\
=&\;\mathbb E_{\mathbf x_t}[\gamma_1\boldsymbol\epsilon_\theta(\mathbf x_t,c,t)+\gamma_0\boldsymbol\epsilon_\theta(\mathbf x_t,t)],
\end{align}
which coincides with \cref{eq:annihilation}, indicating the feasibility and a closed-form solution given $c$ and $t$ as below:
\begin{align}\label{eq:closed_form}
\gamma_0=(1-\gamma_1)\mathbb E_{\mathbf x_t}[\boldsymbol\epsilon_\theta(\mathbf x_t,c,t)]/\mathbb E_{\mathbf x_t}[\boldsymbol\epsilon_\theta(\mathbf x_t,t)].
\end{align}

As for variance, however, normally we cannot analytically calculate the variance of $\hat p_{\theta}(\mathbf x_t|c)$.
Instead, we study the variance of toy data in \cref{subsec:method.2} as an empirical evidence in the following theorem, where proof is in \Cref{subsec:proof.4}.

\begin{theorem}\label{thm:generalized_cfg_variance}
Under settings in \Cref{thm:gs_shift}, denote by $q_{0,\gamma_1,\gamma_0}^{\mathrm{deter}}(\mathbf x_0|c)$ the conditional distribution by PF-ODE with $\gamma_1$ and $\gamma_0$ as in \cref{eq:generalized_cfg}. Then we have
\begin{align}
\mathrm{var}_{q_{0,\gamma_1,\gamma_0}^{\mathrm{deter}}(\mathbf x_0|c)}[\mathbf x_0]=2^{\gamma_0}(T+1)^{1-\gamma_1}(T+2)^{-\gamma_0}.
\end{align}
\end{theorem}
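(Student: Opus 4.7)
The plan is to exploit the fact that under the toy settings both score functions are linear in $\mathbf{x}_t$, so substituting \cref{eq:generalized_cfg} into the PF-ODE in \cref{eq:pf_ode} yields a scalar linear ODE that preserves Gaussianity. First, I would specialize to the VE-SDE noise schedule implied by the toy setup, where $\sigma_t^2 = t$ gives $f_t = 0$ and $g_t^2 = 1$, and plug in the two linear scores $\nabla_{\mathbf{x}_t}\log q_t(\mathbf{x}_t|c) = -(\mathbf{x}_t - c)/(1+t)$ and $\nabla_{\mathbf{x}_t}\log q_t(\mathbf{x}_t) = -\mathbf{x}_t/(2+t)$. This produces
\begin{align}
\frac{d\mathbf{x}_t}{dt} = \frac{1}{2}\left[\frac{\gamma_1}{1+t} + \frac{\gamma_0}{2+t}\right]\mathbf{x}_t - \frac{\gamma_1 c}{2(1+t)},
\end{align}
a first-order linear scalar ODE in $\mathbf{x}_t$ with $t$-dependent coefficients.

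Second, I would solve this ODE by the standard integrating-factor route. The homogeneous factor is $u(t) = \exp\bigl(\int_T^t \tfrac{1}{2}[\gamma_1/(1+s) + \gamma_0/(2+s)]\,ds\bigr)$, which evaluates in closed form to
\begin{align}
u(t) = \left(\frac{1+t}{1+T}\right)^{\gamma_1/2}\left(\frac{2+t}{2+T}\right)^{\gamma_0/2},
\end{align}
and the full solution takes the affine form $\mathbf{x}_t = u(t)\mathbf{x}_T + w(t,c)$ where $w(t,c)$ depends only on $t$ and $c$, not on $\mathbf{x}_T$. One can obtain $w$ explicitly by variation of parameters, but the variance computation never needs it.

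Finally, I would propagate the variance. Since $q_T(\mathbf{x}_T|c) \sim \mathcal{N}(c, 1+T)$ and the only $\mathbf{x}_T$-dependent part of the solution is the deterministic factor $u(t)$, the induced conditional distribution $q_{0,\gamma_1,\gamma_0}^{\mathrm{deter}}(\mathbf{x}_0|c)$ is Gaussian with variance $u(0)^2(1+T)$. Evaluating $u(0)^2 = (1+T)^{-\gamma_1}\cdot 2^{\gamma_0}(2+T)^{-\gamma_0}$ and multiplying by $1+T$ yields exactly $2^{\gamma_0}(T+1)^{1-\gamma_1}(T+2)^{-\gamma_0}$. The argument is essentially the same Gaussian-ODE propagation used to establish \Cref{thm:gs_shift}, merely with the rectified coefficients $(\gamma_1,\gamma_0)$ in place of the CFG pair $(\gamma,1-\gamma)$; the only care required is to keep signs straight when integrating the PF-ODE backward in time from $T$ to $0$ and to correctly identify $f_t,g_t$ from the toy variances $1+t$ and $2+t$.
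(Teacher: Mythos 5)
Your proposal is correct and follows essentially the same route as the paper's proof: solve the linear PF-ODE with the toy scores by an integrating factor (the paper's variation-of-constants), observe that $\mathbf x_0$ is affine in $\mathbf x_T$ with deterministic coefficient $u(0)=(1+T)^{-\gamma_1/2}2^{\gamma_0/2}(2+T)^{-\gamma_0/2}$, and propagate the variance of $q_T(\mathbf x_T|c)\sim\mathcal N(c,T+1)$ to get $u(0)^2(T+1)=2^{\gamma_0}(T+1)^{1-\gamma_1}(T+2)^{-\gamma_0}$. The only difference is cosmetic: you skip writing out the particular (condition-dependent) term, which the paper computes explicitly but which indeed plays no role in the variance.
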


\begin{figure*}[!tp]
\centering
\includegraphics[width=0.9\textwidth]{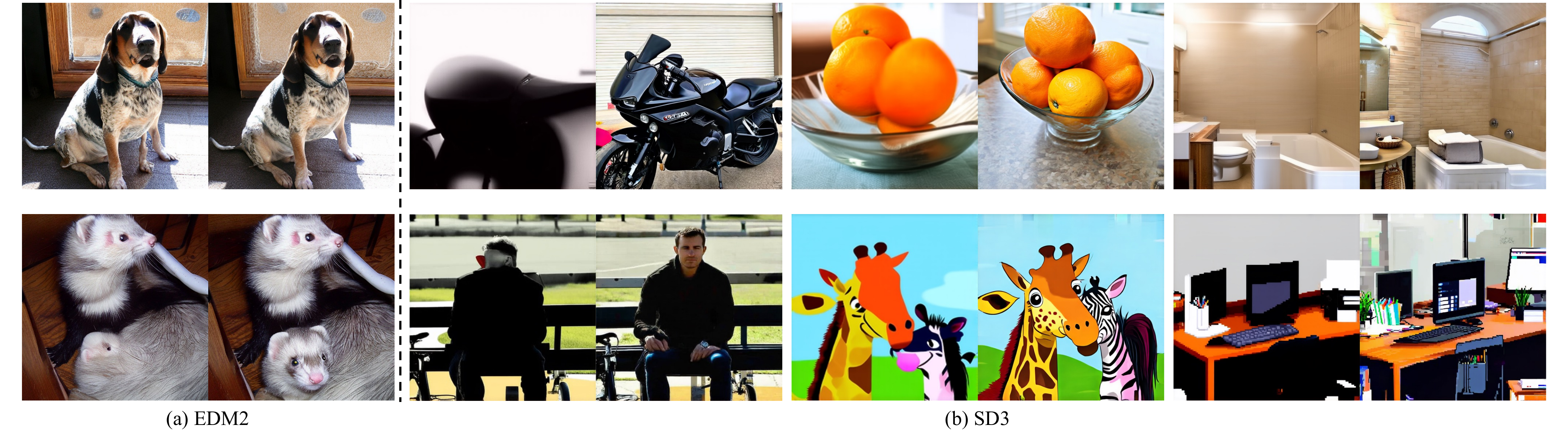}
\vspace{-5pt}
\caption{
    \textbf{Qualitative comparison} on EDM2 and SD3.
    Left and right in each cell suggest samples via CFG and \method, respectively.
}
\vspace{-5pt}
\label{fig:visualization}
\end{figure*}

According to \Cref{thm:generalized_cfg_variance}, it is noteworthy that variance of $q_{0,\gamma_1,\gamma_0}^{\mathrm{deter}}(\mathbf x_0|c)$ under toy setting is guaranteed to be smaller than the ground-truth $\mathrm{var}_{q_0(\mathbf x_0|c)}[\mathbf x_0]=1$ when each component satisfies that $\gamma_{0,i}\leqslant0$ and $\gamma_{1,i}+\gamma_{0,i}\geqslant1$, especially when $T\rightarrow+\infty$.

Now we formally propose the constraints.
First, we need each component $\gamma_{1,i}>1$ for strengthened conditional fidelity.
Then for expectation, it is noteworthy that $\Delta_T=0$ satisfies the assumption in \Cref{thm:generalized_cfg_expectation}.
Therefore by induction, it is feasible to annihilate $\Delta_0$ by annihilation of \cref{eq:annihilation} at all intermediate timesteps $t$.
Finally as for variance, we empirically set $\gamma_{0,i}\leqslant0$ and $\gamma_{1,i}+\gamma_{0,i}\geqslant0$.

Practically, we can determine $\gamma_0$ according to the guidance strength $\gamma_1$, condition $c$, and timestep $t$, according to the closed-form solution in \cref{eq:closed_form}.
Concretely, given condition $c$, it is feasible to pre-compute a collection of $\{(\epsilon_\theta(\mathbf x_t,c,t),\boldsymbol\epsilon_\theta(\mathbf x_t,t))\}_t$ by traversing $q_0(\mathbf x_0|c)$, and maintain a lookup table consisting of $\mathbb E_{\mathbf x_t}[\boldsymbol\epsilon_\theta(\mathbf x_t,c,t)]/\mathbb E_{\mathbf x_t}[\boldsymbol\epsilon_\theta(\mathbf x_t,t)]$.
Then given any $\gamma_1$, we can directly achieve $\gamma_0$ by multiplying $-(\gamma_1-1)$ with the expectation ratio.
Pseudo-code is addressed in \Cref{sec:supp_exp}.

We make further discussion about \method.
By Cauchy-Schwarz inequality and \cref{eq:expectation_diff} we have:
\begin{align}
\|\Delta_{t-1}\|_2^2&\leqslant(\sigma_{t-1}-\frac{\alpha_{t-1}}{\alpha_t}\sigma_t)^2\mathbb E_{\mathbf x_t}[\|\boldsymbol\epsilon_{\gamma_1,\gamma_0}^{c,t}(\mathbf x_t)\|_2^2].
\end{align}
Then we can define the objective resembling DPMs as below, optimizing reversely from $t=T$ to $0$.
\begin{align}\label{eq:objective}
\mathcal L=\mathbb E_{\mathbf x_t,t}[\|(\gamma_1-1)\boldsymbol\epsilon_\theta(\mathbf x_t,c,t)+\gamma_0\boldsymbol\epsilon_\theta(\mathbf x_t,t)\|_2^2].
\end{align}
Resembling \Cref{thm:cfg}, with \cref{eq:objective}, we can also show the compatibility of \method with DDIM, which is summarized as the theorem below.
Proof is addressed in \Cref{subsec:proof.5}

\begin{theorem}\label{thm:generalized_cfg}
For any $\{\delta_t\}_t$, \method with $\mathcal L$ is compatible with native DPM up to a constant.
\end{theorem}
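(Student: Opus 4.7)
The plan is to imitate the proof of Theorem \ref{thm:cfg} from Appendix \ref{subsec:proof.2}, substituting the CFG noise prediction with the ReCFG version $\hat{\boldsymbol\epsilon}_\theta(\mathbf{x}_t,c,t)=\gamma_1\boldsymbol\epsilon_\theta(\mathbf{x}_t,c,t)+\gamma_0\boldsymbol\epsilon_\theta(\mathbf{x}_t,t)$ and tracking how this change propagates through the ELBO. I would first introduce the analogous variational objective $J_{\delta,\gamma_1,\gamma_0}(\boldsymbol\epsilon_\theta)=\mathbb{E}_{q_\delta(\mathbf{x}_{0:T}|c)}[\log(q_\delta(\mathbf{x}_{1:T}|\mathbf{x}_0,c)/\hat p_\theta(\mathbf{x}_{0:T}|c))]$ with $\hat p_\theta$ as in \cref{eq:generalized_cfg_ddim2}. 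Since $q_\delta(\mathbf{x}_{t-1}|\mathbf{x}_t,\mathbf{x}_0,c)=q_\delta(\mathbf{x}_{t-1}|\mathbf{x}_t,\mathbf{x}_0)$ is independent of $c$, the standard ELBO decomposition collapses $J_{\delta,\gamma_1,\gamma_0}$ into a sum of KL divergences between Gaussians sharing variance $\delta_t^2\mathbf{I}$, plus boundary terms that are constant in both $\theta$ and $(\gamma_1,\gamma_0)$.

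Next, each Gaussian KL reduces to a squared distance between the two means. Plugging in the DDIM mean formula and using $\mathbf{x}_t=\alpha_t\mathbf{x}_0+\sigma_t\boldsymbol\epsilon$, this distance simplifies (up to a positive $t$-dependent scalar $w_t$) to $\|\hat{\boldsymbol\epsilon}_\theta(\mathbf{x}_t,c,t)-\boldsymbol\epsilon\|_2^2$, mirroring the CFG derivation exactly. The step that is new to \method is the algebraic splitting
\begin{equation*}
\hat{\boldsymbol\epsilon}_\theta(\mathbf{x}_t,c,t)-\boldsymbol\epsilon = \boldsymbol\epsilon_{\gamma_1,\gamma_0}^{c,t}(\mathbf{x}_t) + \bigl(\boldsymbol\epsilon_\theta(\mathbf{x}_t,c,t)-\boldsymbol\epsilon\bigr),
\end{equation*}
so that expanding the square and summing over $t$ with the weights $w_t$ produces three pieces: the integrand of $\mathcal{L}$, the integrand of the native conditional DPM loss $J_{\mathrm{DPM}}$, and a cross term $2\sum_t w_t\,\mathbb{E}[\langle\boldsymbol\epsilon_{\gamma_1,\gamma_0}^{c,t}(\mathbf{x}_t),\,\boldsymbol\epsilon_\theta(\mathbf{x}_t,c,t)-\boldsymbol\epsilon\rangle]$.

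The main obstacle is showing that this cross term does not contaminate the equivalence, and my plan is to combine Remark \ref{rem:cfg} with the annihilation relation \cref{eq:annihilation} enforced by the closed-form choice \cref{eq:closed_form}. By Remark \ref{rem:cfg}, $\boldsymbol\epsilon_\theta(\mathbf{x}_t,c,t)$ and $\boldsymbol\epsilon_\theta(\mathbf{x}_t,t)$ are $-\sigma_t$ times the conditional and unconditional score functions, and the tower property together with the Bayes-optimality identity $\boldsymbol\epsilon_\theta^\star(\mathbf{x}_t,c,t)=\mathbb{E}[\boldsymbol\epsilon\mid\mathbf{x}_t,c]$ makes the cross term independent of $\theta$ along the relevant direction. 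Concretely, \cref{eq:closed_form} forces $\mathbb{E}_{\mathbf{x}_t}[\boldsymbol\epsilon_{\gamma_1,\gamma_0}^{c,t}(\mathbf{x}_t)]=\mathbf{0}$, which is precisely what causes the expected inner product to collapse into a constant with respect to the training objective. Aggregating these reductions over $t$ yields $J_{\delta,\gamma_1,\gamma_0}=\mathcal{L}+J_{\mathrm{DPM}}+C$ for a constant $C$ independent of $\theta$; invoking Theorem \ref{thm:cfg} at $(\gamma_1,\gamma_0)=(1,0)$ to identify $J_{\mathrm{DPM}}$ with native DPM then establishes the claimed compatibility of \method with $\mathcal{L}$ up to a constant.
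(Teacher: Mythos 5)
Your overall route is essentially the paper's own: form $J_{\delta,\gamma_1,\gamma_0}$, decompose it into the reconstruction term and Gaussian KLs, reduce each KL to a weighted noise-prediction squared error, and show the result equals the integrand of $\mathcal L$ in \cref{eq:objective} up to additive constants. Your splitting $\hat{\boldsymbol\epsilon}_\theta(\mathbf x_t,c,t)-\boldsymbol\epsilon=\boldsymbol\epsilon_{\gamma_1,\gamma_0}^{c,t}(\mathbf x_t)+\bigl(\boldsymbol\epsilon_\theta(\mathbf x_t,c,t)-\boldsymbol\epsilon\bigr)$ is a rearrangement of the paper's completing-the-square step: the paper's constant $C_2=\mathbb E_{\mathbf x_0,\boldsymbol\epsilon}[\|\boldsymbol\epsilon\|_2^2-\|\boldsymbol\epsilon_\theta(\mathbf x_t,c,t)\|_2^2]$ is exactly your $J_{\mathrm{DPM}}$ contribution by the Pythagorean identity, so the two decompositions coincide.

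The one step you justify incorrectly is the cross term. You claim that \cref{eq:closed_form}, via $\mathbb E_{\mathbf x_t}[\boldsymbol\epsilon_{\gamma_1,\gamma_0}^{c,t}(\mathbf x_t)]=\mathbf 0$, is ``precisely what causes'' the expected inner product to collapse. It is not: $\mathbb E[\langle A(\mathbf x_t),B\rangle]$ does not vanish merely because $\mathbb E[A(\mathbf x_t)]=\mathbf 0$ when $A(\mathbf x_t)$ and $B$ are correlated, so the annihilation constraint \cref{eq:annihilation} is neither sufficient for this step nor needed. The correct and sufficient reason is the one you only gesture at: since $\boldsymbol\epsilon_\theta(\mathbf x_t,c,t)=\mathbb E_{q(\boldsymbol\epsilon|\mathbf x_t,c)}[\boldsymbol\epsilon|\mathbf x_t]$ and $\boldsymbol\epsilon_{\gamma_1,\gamma_0}^{c,t}(\mathbf x_t)$ is a function of $\mathbf x_t$ alone, \cref{lem:1} gives $\mathbb E_{\mathbf x_0,\boldsymbol\epsilon}[\langle\boldsymbol\epsilon_{\gamma_1,\gamma_0}^{c,t}(\mathbf x_t),\boldsymbol\epsilon_\theta(\mathbf x_t,c,t)-\boldsymbol\epsilon\rangle]=0$ identically, for arbitrary $\gamma_1,\gamma_0$ --- exactly the mechanism used in \cref{eq:proof9,eq:proof12}. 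This matters because the theorem holds for any $\gamma_1,\gamma_0$ (functions of $c$ and $t$), not only for the rectified choice; tying the proof to \cref{eq:closed_form} would needlessly weaken the statement. A second, smaller slip: the $t=1$ boundary term $-\log\hat p_{\theta}(\mathbf x_0|\mathbf x_1,c)$ is not constant in $\theta$ or $(\gamma_1,\gamma_0)$; it must be reduced to the same squared-error form as the KL terms, as the paper does in the final display of its proof.
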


\section{Experiments}\label{sec:exp}

\definecolor{mygreen}{RGB}{34,170,133}
\begin{table*}[t]
\begin{minipage}[t]{0.48\textwidth}
\caption{
    \textbf{Sample quality} on ImageNet~\citep{dengjia2009}.
}
\label{tab:imagenet}
\vspace{-7pt}
\centering
\SetTblrInner{rowsep=1.048pt}                
\SetTblrInner{colsep=2.2pt}                  
\scriptsize
\begin{tblr}{
    cell{2-22}{2-7}={halign=c,valign=m},         
    cell{2-22}{1}={halign=l,valign=m},           
    cell{24-30}{2-7}={halign=c,valign=m},        
    cell{24-30}{1}={halign=l,valign=m},          
    cell{1,23}{1-7}={halign=l,valign=m},         
    cell{1,23}{1}={c=7}{},                       
    cell{25,27,29}{1}={r=2}{},                   
    hline{1,2,23,24,31}={1-7}{1.0pt},            
    hline{3,5,7,9,11,13,15,17,19,21}={1-7}{},    
    hline{25,27,29}={1-7}{},                     
}
\textbf{ImageNet 256x256} &                                  &                    &                                     &                    &                    &                   \\
                    Model &                           Method & NFE $(\downarrow)$ & FD$_{\text{DINOv2}}$ $(\downarrow)$ & FID $(\downarrow)$ & Prec. $(\uparrow)$ & Rec. $(\uparrow)$ \\
                 DiT-XL/2 &                              CFG &                250 &                              120.07 &               2.27 &               0.83 &              0.57 \\
          $\gamma_1=1.50$ & \textcolor{mygreen}{\bf \method} &                250 &                          \bf 118.71 &           \bf 2.13 &               0.83 &          \bf 0.58 \\
                 DiT-XL/2 &                              CFG &                250 &                              162.68 &               3.22 &               0.76 &              0.62 \\
          $\gamma_1=1.25$ & \textcolor{mygreen}{\bf \method} &                250 &                          \bf 145.79 &           \bf 3.01 &           \bf 0.77 &          \bf 0.63 \\
                      LDM &                              CFG &                 20 &                              180.60 &              18.87 &               0.95 &              0.15 \\
           $\gamma_1=5.0$ & \textcolor{mygreen}{\bf \method} &                 20 &                          \bf 169.41 &          \bf 16.95 &           \bf 0.91 &          \bf 0.18 \\
                      LDM &                              CFG &                 20 &                              149.79 &              11.46 &           \bf 0.94 &              0.27 \\
           $\gamma_1=3.0$ & \textcolor{mygreen}{\bf \method} &                 20 &                          \bf 142.54 &           \bf 9.78 &               0.91 &          \bf 0.32 \\
                      LDM &                              CFG &                 20 &                              152.51 &               5.32 &               0.88 &              0.42 \\
           $\gamma_1=2.0$ & \textcolor{mygreen}{\bf \method} &                 20 &                          \bf 149.91 &           \bf 4.40 &               0.88 &          \bf 0.45 \\
                      LDM &                              CFG &                 20 &                              203.17 &               5.36 &               0.80 &              0.51 \\
           $\gamma_1=1.5$ & \textcolor{mygreen}{\bf \method} &                 20 &                          \bf 198.44 &           \bf 4.78 &               0.80 &          \bf 0.53 \\
                      LDM &                              CFG &                 10 &                              156.41 &              16.78 &           \bf 0.94 &              0.16 \\
           $\gamma_1=5.0$ & \textcolor{mygreen}{\bf \method} &                 10 &                          \bf 150.47 &          \bf 14.46 &               0.89 &          \bf 0.22 \\
                      LDM &                              CFG &                 10 &                              153.97 &              10.13 &               0.91 &              0.28 \\
           $\gamma_1=3.0$ & \textcolor{mygreen}{\bf \method} &                 10 &                          \bf 142.04 &           \bf 8.26 &               0.91 &          \bf 0.33 \\
                      LDM &                              CFG &                 10 &                              183.39 &               7.83 &               0.81 &              0.38 \\
           $\gamma_1=2.0$ & \textcolor{mygreen}{\bf \method} &                 10 &                          \bf 182.04 &           \bf 5.98 &           \bf 0.83 &          \bf 0.42 \\
                      LDM &                              CFG &                 10 &                              251.07 &              13.19 &               0.69 &              0.46 \\
           $\gamma_1=1.5$ & \textcolor{mygreen}{\bf \method} &                 10 &                          \bf 248.23 &          \bf 11.27 &           \bf 0.72 &          \bf 0.49 \\
\textbf{ImageNet 512x512} &                                  &                    &                                     &                    &                    &                   \\
                    Model &                           Method & NFE $(\downarrow)$ & FD$_{\text{DINOv2}}$ $(\downarrow)$ & FID $(\downarrow)$ & Prec. $(\uparrow)$ & Rec. $(\uparrow)$ \\
                   EDM2-S &                              CFG &                 63 &                               52.32 &               2.29 &               0.83 &              0.59 \\
                          & \textcolor{mygreen}{\bf \method} &                 63 &                           \bf 50.56 &           \bf 2.23 &               0.83 &              0.59 \\
                   EDM2-M &                              CFG &                 63 &                               41.98 &               2.12 &               0.81 &              0.60 \\
                          & \textcolor{mygreen}{\bf \method} &                 63 &                           \bf 41.55 &           \bf 2.06 &               0.81 &              0.61 \\
                   EDM2-L &                              CFG &                 63 &                               38.20 &               1.96 &               0.81 &              0.62 \\
                          & \textcolor{mygreen}{\bf \method} &                 63 &                           \bf 36.75 &           \bf 1.89 &               0.81 &              0.62 \\
\end{tblr}
\end{minipage}
\hfill
\begin{minipage}[t]{0.48\textwidth}
\caption{
    \textbf{Sample quality} on CC12M~\citep{changpinyo2021cc12m}
}
\label{tab:cc12m}
\vspace{-7pt}
\centering
\SetTblrInner{rowsep=1.916pt}                
\SetTblrInner{colsep=2.5pt}                  
\scriptsize
\begin{tblr}{
    cell{2-26}{2-6}={halign=c,valign=m},     
    cell{2-26}{1}={halign=l,valign=m},       
    cell{1}{1-6}={halign=l,valign=m},        
    cell{1}{1}={c=6}{},                      
    hline{1,2,27}={1-6}{1.0pt},              
    hline{3,7,11,15,19,23,27}={1-6}{},       
}
\textbf{CC12M 512x512, SD3~\citep{esser2024sd3}} &            &                    &                     &                                     &                  \\
                                          Method & $\gamma_1$ & NFE $(\downarrow)$ & CLIP-S $(\uparrow)$ & FD$_{\text{DINOv2}}$ $(\downarrow)$ & MPS $(\uparrow)$ \\
                                             CFG &        7.5 &                 10 &               0.262 &                             1105.51 &            9.828 \\
                \textcolor{mygreen}{\bf \method} &        7.5 &                 10 &               0.263 &                             1010.14 &           10.250 \\
               RescaleCFG~\citep{lin2024rescale} &        7.5 &                 10 &               0.267 &                             1011.62 &           11.258 \\
   RescaleCFG \textcolor{mygreen}{\bf + \method} &        7.5 &                 10 &           \bf 0.268 &                          \bf 979.87 &       \bf 11.336 \\
                                             CFG &        5.0 &                 10 &               0.268 &                             1053.44 &           10.883 \\
                \textcolor{mygreen}{\bf \method} &        5.0 &                 10 &           \bf 0.269 &                              999.48 &           11.031 \\
               RescaleCFG~\citep{lin2024rescale} &        5.0 &                 10 &               0.267 &                             1009.67 &           11.242 \\
   RescaleCFG \textcolor{mygreen}{\bf + \method} &        5.0 &                 10 &           \bf 0.269 &                          \bf 984.25 &       \bf 11.297 \\
                                             CFG &        2.5 &                 10 &               0.265 &                             1016.79 &           10.367 \\
                \textcolor{mygreen}{\bf \method} &        2.5 &                 10 &               0.265 &                              977.39 &           10.438 \\
               RescaleCFG~\citep{lin2024rescale} &        2.5 &                 10 &               0.265 &                             1003.64 &           10.445 \\
   RescaleCFG \textcolor{mygreen}{\bf + \method} &        2.5 &                 10 &           \bf 0.266 &                          \bf 963.21 &       \bf 10.477 \\
                                             CFG &        7.5 &                  5 &               0.209 &                             1466.91 &            3.189 \\
                \textcolor{mygreen}{\bf \method} &        7.5 &                  5 &               0.229 &                             1323.49 &            3.979 \\
               RescaleCFG~\citep{lin2024rescale} &        7.5 &                  5 &           \bf 0.258 &                             1114.92 &            8.102 \\
   RescaleCFG \textcolor{mygreen}{\bf + \method} &        7.5 &                  5 &           \bf 0.258 &                         \bf 1070.65 &        \bf 8.219 \\
                                             CFG &        5.0 &                  5 &               0.248 &                             1218.18 &            6.484 \\
                \textcolor{mygreen}{\bf \method} &        5.0 &                  5 &               0.258 &                             1074.60 &            7.398 \\
               RescaleCFG~\citep{lin2024rescale} &        5.0 &                  5 &               0.265 &                             1087.98 &            8.719 \\
   RescaleCFG \textcolor{mygreen}{\bf + \method} &        5.0 &                  5 &           \bf 0.266 &                         \bf 1040.01 &        \bf 8.813 \\
                                             CFG &        2.5 &                  5 &               0.261 &                             1119.06 &            7.902 \\
                \textcolor{mygreen}{\bf \method} &        2.5 &                  5 &           \bf 0.263 &                             1058.86 &            8.172 \\
               RescaleCFG~\citep{lin2024rescale} &        2.5 &                  5 &               0.262 &                             1093.85 &            8.133 \\
   RescaleCFG \textcolor{mygreen}{\bf + \method} &        2.5 &                  5 &           \bf 0.263 &                         \bf 1041.53 &        \bf 8.266 \\
\end{tblr}
\end{minipage}
\vspace{-2pt}
\end{table*}

\begin{table*}[t]
\caption{
    \textbf{Variance} of lookup table over condition $c$ and timestep $t$.
    Note that we employ pixel-wise lookup table involving both $c$ and $t$.
    We report the the mean and variance of lookup table over $c$ and $t$, respectively, which are computed by averaging on all pixels.
}
\label{tab:mean_std}
\vspace{-4pt}
\centering
\SetTblrInner{rowsep=1.248pt}                
\SetTblrInner{colsep=18.0pt}                 
\footnotesize
\begin{tblr}{
    cell{1-3}{2-5}={halign=c,valign=m},      
    cell{1-3}{1}={halign=l,valign=m},        
    hline{1,4}={1-5}{1.0pt},                 
    hline{2}={1-5}{},                        
}
Config.           &     LDM, NFE $=$ 10 &    EDM2, NFE $=$ 63 &      SD3, NFE $=$ 5 &     SD3, NFE $=$ 10 \\
Variance over $c$ & 1.0050 $\pm$ 0.0012 & 1.0060 $\pm$ 0.0119 & 1.0250 $\pm$ 0.0369 & 1.0125 $\pm$ 0.0281 \\
Variance over $t$ & 1.0050 $\pm$ 0.0013 & 1.0060 $\pm$ 0.1545 & 1.0250 $\pm$ 0.0359 & 1.0125 $\pm$ 0.0306 \\
\end{tblr}
\vspace{-10pt}
\end{table*}

\subsection{Experimental Setups}\label{subsec:exp.1}

\noindent\textbf{Datasets and baselines.}
We apply \method to seminal class-conditioned and text-conditioned DPMs, including LDM~\citep{rombach2022high} and DiT~\citep{peebles2023dit} on ImageNet 256~\citep{dengjia2009}, EDM2~\citep{Karras2024edm2} on ImageNet 512, and SD3~\citep{esser2024sd3} on CC12M~\citep{changpinyo2021cc12m}, respectively.

\noindent\textbf{Evaluation metrics.}
As for class-conditioned LDM, DiT, and EDM2, we draw 50,000 samples for Fr\'{e}chet Inception Distance (FID)~\citep{heusel2017gans} and FD$_{\text{DINOv2}}$~\citep{stein2023fddino} to evaluate the fidelity and global coherency of the synthesized images, respectively.
We further use Improved Precision (Prec.) and Recall (Rec.)~\citep{Kynknniemi2019ImprovedPA} to separately measure sample fidelity (Precision) and diversity (Recall).
As for text-conditioned SD3, following the official implementation, we use CLIP Score (CLIP-S)~\citep{radford2021clip,hessel2021clipscore}, FID, and FD$_{\text{DINOv2}}$ on CLIP features~\citep{Sauer2021NEURIPS} on 1,000 samples to evaluate conditional faithfulness and fidelity of the synthesized images, respectively.
We also use MPS~\citep{zhang2024mps} to evaluate aesthetic scores.
All four metrics are evaluated on the same MS-COCO validation split~\citep{lin2015coco} as in official implementation~\citep{esser2024sd3}.

\noindent\textbf{Implementation details.}
We implement \method with NVIDIA A100 GPUs, and employ pre-trained LDM\footnote{https://github.com/CompVis/latent-diffusion}, DiT\footnote{https://github.com/facebookresearch/DiT}, EDM2\footnote{https://github.com/NVlabs/edm2}, and SD3\footnote{https://huggingface.co/stabilityai/stable-diffusion-3-medium-diffusers} checkpoints provided in official implementation.
We reproduce all the experiments with official and more other configurations including NFEs and guidance strengths.

\subsection{Results on Toy Example in \Cref{subsec:method.2}}\label{subsec:exp.2}

We first confirm the effectiveness of our method on toy data, as presented in \cref{subsec:method.2}.
Given the closed-form expressions of score functions, we are able to precisely describe the distributions of both gamma-powered distribution $q_{0,\gamma}(\mathbf x_0|c)$ by native CFG and $q_{0,\gamma_1,\gamma_0}^{\mathrm{deter}}(\mathbf x_0|c)$ by our \method.
The theoretical and numerical DDIM-based simulation value of probability density functions of both $q_{0,\gamma}(\mathbf x_0|c)$ and $q_{0,\gamma_1,\gamma_0}^{\mathrm{deter}}(\mathbf x_0|c)$ are shown in \cref{fig:toy_model}.
It is noteworthy that native CFG drifts the expectation of $q_{0,\gamma}(\mathbf x_0|c)$ further away from the peak of the ground-truth $q_0(\mathbf x_0|c)$ as $\gamma$ becomes larger, consistent with \Cref{thm:gs_shift}.
As a comparison, the peaks of $q_{0,\gamma_1,\gamma_0}^{\mathrm{deter}}(\mathbf x_0|c)$ and $q_0(\mathbf x_0|c)$ coincide, while $q_{0,\gamma_1,\gamma_0}^{\mathrm{deter}}(\mathbf x_0|c)$ is sharpened with smaller variance.
Therefore, by adopting relaxation on coefficients $\gamma_1$ and $\gamma_0$ with specially proposed constraints, our \method manages to annihilate expectation shift, enabling better guidance and thus better conditional fidelity.

\subsection{Results on Real Datasets}\label{subsec:exp.3}

We showcase some results in \cref{fig:visualization}.
One can see that \method could fix artifacts on EDM2.
It is also noteworthy that \method significantly improves synthesis quality on SD3, especially detailed textures.
Beyond the exhibited visualization, we conduct extensive quantitative experiments on state-of-the-art DPMs to further convey the efficacy of \method.
From \cref{tab:imagenet,tab:cc12m}, we can tell that \method is capable of better performance on both class-conditioned and text-conditioned DPMs under various guidance strengths and NFEs especially CLIP-S, indicating better conditional fidelity on open-vocabulary synthesis.
Furthermore, correction for theoretical flaws of CFG enables strong compatibility of \method with other empirical strategies such as RescaleCFG~\citep{lin2024rescale}, achieving better performance.

\subsection{Analyses}\label{subsec:exp.4}

\begin{table*}[t]
\begin{minipage}[t]{0.48\textwidth}
\caption{
    \textbf{Ablation study} of the number of traversals (the number after \method) for lookup table on ImageNet~\citep{dengjia2009}.
    For clearer demonstration, baselines of native CFG are highlighted in \textbf{\textcolor{gray}{gray}}.
}
\label{tab:ablation_imagenet}
\vspace{-4pt}
\centering
\SetTblrInner{rowsep=1.138pt}                
\SetTblrInner{colsep=5.2pt}                  
\footnotesize
\begin{tblr}{
    cell{2-6}{1-6}={halign=c,valign=m},      
    cell{8-16}{1-6}={halign=c,valign=m},     
    cell{1,7}{1-6}={halign=l,valign=m},      
    cell{1,7}{1}={c=6}{},                    
    hline{1,2,7,8,17}={1-6}{1.0pt},          
    hline{3,9}={1-6}{},                      
    cell{3,9}{1-6}={bg=lightgray!35},        
}
\textbf{ImageNet 256x256, LDM~\citep{rombach2022high}}   &             &                    &                    &                    &                   \\
$\gamma_1$                                               & $\gamma_0$  & NFE $(\downarrow)$ & FID $(\downarrow)$ & Prec. $(\uparrow)$ & Rec. $(\uparrow)$ \\
                                                     3.0 &        -2.0 &                 10 &              10.13 &               0.91 &              0.28 \\
                                                     3.0 &  \method-10 &                 10 &               8.88 &           \bf 0.92 &              0.30 \\
                                                     3.0 & \method-100 &                 10 &               8.70 &           \bf 0.92 &              0.31 \\
                                                     3.0 & \method-500 &                 10 &           \bf 8.26 &               0.91 &          \bf 0.33 \\
\textbf{ImageNet 512x512, EDM2-S~\citep{Karras2024edm2}} &             &                    &                    &                    &                   \\
$\gamma_1$                                               & $\gamma_0$  & NFE $(\downarrow)$ & FID $(\downarrow)$ & Prec. $(\uparrow)$ & Rec. $(\uparrow)$ \\
                                                     2.5 &        -1.5 &                 63 &               5.87 &           \bf 0.85 &              0.46 \\
                                                     2.5 &  \method-10 &                 63 &               5.06 &               0.84 &              0.47 \\
                                                     2.5 & \method-100 &                 63 &               4.99 &               0.84 &              0.45 \\
                                                     2.5 & \method-500 &                 63 &           \bf 4.84 &               0.84 &          \bf 0.48 \\
                                                     2.0 &        -1.0 &                 63 &               4.18 &           \bf 0.85 &              0.52 \\
                                                     2.0 &  \method-10 &                 63 &               3.70 &               0.84 &              0.52 \\
                                                     2.0 & \method-100 &                 63 &               3.66 &               0.84 &              0.52 \\
                                                     2.0 & \method-500 &                 63 &           \bf 3.61 &               0.84 &              0.52 \\
\end{tblr}
\end{minipage}
\hfill
\begin{minipage}[t]{0.48\textwidth}
\caption{
    \textbf{Ablation study} of the number of traversals (the number after \method) for lookup table on CC12M~\citep{changpinyo2021cc12m}.
    For clearer demonstration, baselines of native CFG are highlighted in \textbf{\textcolor{gray}{gray}}.
}
\label{tab:ablation_cc12m}
\vspace{-4pt}
\centering
\SetTblrInner{rowsep=2.089pt}                
\SetTblrInner{colsep=8.1pt}                  
\footnotesize
\begin{tblr}{
    cell{2-14}{1-5}={halign=c,valign=m},     
    cell{1}{1-5}={halign=l,valign=m},        
    cell{1}{1}={c=5}{},                      
    hline{1,2,15}={1-5}{1.0pt},              
    hline{3}={1-5}{},                        
    cell{3,7,11}{1-5}={bg=lightgray!35},     
}
\textbf{CC12M 512x512, SD3~\citep{esser2024sd3}} &             &                    &                     &                    \\
$\gamma_1$                                       &  $\gamma_0$ & NFE $(\downarrow)$ & CLIP-S $(\uparrow)$ & FID $(\downarrow)$ \\
                                             5.0 &        -4.0 &                 25 &               0.267 &              72.37 \\
                                             5.0 &  \method-10 &                 25 &               0.267 &              72.15 \\
                                             5.0 & \method-100 &                 25 &           \bf 0.268 &              72.03 \\
                                             5.0 & \method-500 &                 25 &           \bf 0.268 &          \bf 71.95 \\
                                             5.0 &        -4.0 &                 10 &               0.268 &              72.55 \\
                                             5.0 &  \method-10 &                 10 &               0.268 &              71.61 \\
                                             5.0 & \method-100 &                 10 &               0.268 &              70.64 \\
                                             5.0 & \method-500 &                 10 &           \bf 0.269 &          \bf 70.31 \\
                                             5.0 &        -4.0 &                  5 &               0.248 &             115.51 \\
                                             5.0 &  \method-10 &                  5 &               0.252 &             107.09 \\
                                             5.0 & \method-100 &                  5 &               0.256 &             103.25 \\
                                             5.0 & \method-500 &                  5 &           \bf 0.258 &         \bf 101.82 \\
\end{tblr}
\end{minipage}
\vspace{-8pt}
\end{table*}

\noindent\textbf{Variance of lookup table.}
Note that we need to pre-compute the lookup table consisting of expectation ratios for all conditions $c$, which is time-consuming and impractical for open-vocabulary distributions (\textit{e.g.}, text-conditioned DPMs).
In \cref{tab:mean_std} we report the mean and variance of expectation ratios over condition $c$, which is averaged on all timesteps and pixels.
One can observe that larger NFE suggests smaller ratio with also smaller variance.
It is also noteworthy that the variance of text-conditioned DPMs is larger than that of class-conditioned ones due to far more complex open-vocabulary conditions, while both of which is insignificant compared to the mean.
Therefore, it is feasible to prepare the lookup table for only part of all potential conditions and use the mean for all conditions, serving as a practically adequate strategy to improve time efficiency.
Variance over timestep $t$ averaged on all pixels and part of conditions is also reported in \cref{tab:mean_std}, where similar conclusion could be achieved.

\noindent\textbf{Ablation studies.}
Recall that we pre-compute the lookup table by traversing $q_0(\mathbf x_0|c)$ for each condition $c$.
Comprehensive ablation studies reported in \cref{tab:ablation_imagenet,tab:ablation_cc12m} convey a direct and clear picture of the efficacy of \method under different numbers of traversals.
We can conclude that larger number of traversals suggests better guidance performance, yet improvements from 100 to 500 traversals are relatively inconspicuous, especially on text-conditoned DPMs.
In other words, employing 500 samples per condition is adequate in practice to serve as an empirical setting.

\noindent\textbf{Time cost.}
Given the analyses on variance of expectation ratios over condition $c$ and ablation on traversals, preparing the lookup table is quite efficient.
In practice, we sample 500 images for a subset of 100 conditions, which takes $\sim3$ hours using 1 NVIDIA A100 GPU.
The time cost is very close to performing FID evaluation using 50,000 images.

\noindent\textbf{Pixel-wise lookup table.}
\method enables pixel-specific guidance coefficients $\gamma_1$ and $\gamma_0$ with the same shape as score functions, thanks to the closed-form solution in \cref{eq:closed_form}, \textit{i.e.}, we can assign $\gamma_0$ for each pixel by maintaining the lookup table of pixel-wise expectation ratios.
\cref{fig:coeff} demonstrates the ratios on LDM, EDM2, and SD3 at uniformly sampled timesteps under different NFEs.
Both LDM and EDM2 show the generation of class 0 in ImageNet (\textit{i.e.}, ``tench''), while SD3 adopts the prompt ``A bicycle replica with a clock as the front wheel''.
One can observe that expectation ratios at the same timestep vary largely by different pixels, and there appears no general rules on the relation between $\gamma_1$ and $\gamma_0$.
Therefore, it is indicated that trivially setting $\gamma_1$ and $\gamma_0$ to be scalars is less reasonable.
As a comparison, our method makes it possible to employ more precise control on guided sampling in a simple and post-hoc fashion without further fine-tuning, enabling better performance.
It is also noteworthy that expectation ratio of SD3 exhibits noticeable shapes, probably due to more informative text prompts than one-hot class labels and more powerful model thanks to the training scale.

\subsection{Discussions}\label{subsec:exp.5}

Classifier-Free Guidance is designed from Bayesian theory to facilitate conditional sampling, yet appears incompatible with original diffusion theory.
Therefore, we believe \method is attached to great importance on guided sampling by fixing the theoretical flaw of CFG.
Despite the success on better conditional fidelity, our algorithm has several potential limitations.
Theoretically, we need to pre-compute the lookup table by traversing the dataset to achieve rectified coefficients for each condition.
Although we conduct extensive ablation studies on the number of traversals and variance over condition $c$, providing an adequate strategy especially for open-vocabulary datasets on text-conditioned synthesis, the optimal strategy is unexplored.
Besides, we at present cannot provide precise control on variance of \method due to incomputable variance in denoising process, and turn to employ empirical values.
Therefore, how to further conquer these problems (\textit{e.g.}, employing a predictor network $\boldsymbol\omega(c,t)$ for better $\gamma_0$ on open-vocabulary datasets according to \cref{eq:objective}) will be an interesting avenue for future research.
Although leaving the variance behavior unexplored, we hope that \method will encourage the community to close the gap in the future.

\section{Conclusion}\label{sec:conclusion}

In this paper, we analyze the theoretical flaws of native Classifier-Free Guidance technique and the induced expectation shift phenomenon.
We theoretically claim the exact value of expectation shift on a toy distribution.
Introducing a relaxation on coefficients of CFG and novel constraints, we manage to complete the theory of guided sampling by fixing the incompatibility between CFG and diffusion theory.
Accordingly, thanks to the closed-form solution to the constraints, we propose \method, a post-hoc algorithm aiming at more faithful guided sampling by determining the coefficients from a pre-computed lookup table.
We further study the behavior of the lookup table, proposing an adequate strategy for better time efficiency in practice.
Comprehensive experiments demonstrate the efficacy of our method on various state-of-the-art DPMs under different NFEs and guidance strengths.

\section*{Acknowledgements}

This work was partially supported by the Natural Science Foundation of China (62461160309, 62302297), Beijing Science and Technology plan project (Z231100005923029), NSFC-RGC Joint Research Scheme (N\_HKU705/24), Ant Group Research Intern Program, and Shanghai Sailing Program (22YF1420300).

{
\small
\bibliographystyle{ieeenat_fullname}
\bibliography{ref.bib}
}

\clearpage
\appendix
\renewcommand\thesection{\Alph{section}}
\renewcommand\thefigure{S\arabic{figure}}
\renewcommand\thetable{S\arabic{table}}
\renewcommand\theequation{S\arabic{equation}}
\setcounter{figure}{0}
\setcounter{table}{0}
\setcounter{equation}{0}
\setcounter{page}{1}
\maketitlesupplementary

\section*{Appendix}

\section{Proofs and derivations}\label{sec:proof}

In this section, we will prove the theorems stated in the main manuscript.

\subsection{Proof of \Cref{thm:cfg}}\label{subsec:proof.2}

We first claim two lemmas which are crucial for the proof.

\begin{lemma}\label{lem:1}
Let $g(\mathbf x_t)$ and $h(\mathbf x_t,\boldsymbol\epsilon)$ be integrable functions, then the following equality holds.
\begin{align}
\mathbb E_{q(\mathbf x)}[\langle g(\mathbf x),\mathbb E_{q(\boldsymbol\epsilon|\mathbf x)}[h(\mathbf x,\boldsymbol\epsilon)|\mathbf x]\rangle]=\mathbb E_{q(\mathbf x,\boldsymbol\epsilon)}[\langle g(\mathbf x),h(\mathbf x,\boldsymbol\epsilon)\rangle],
\end{align}
in which $\langle\cdot,\cdot\rangle$ is inner product.
\end{lemma}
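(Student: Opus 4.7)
The plan is to prove Lemma 1 by combining two elementary facts: (i) the inner product is bilinear, so a factor depending only on $\mathbf{x}$ can be moved inside a conditional expectation given $\mathbf{x}$; and (ii) the tower property of expectation, $\mathbb{E}_{q(\mathbf{x})}[\mathbb{E}_{q(\boldsymbol\epsilon\mid \mathbf{x})}[\,\cdot\,\mid \mathbf{x}]] = \mathbb{E}_{q(\mathbf{x},\boldsymbol\epsilon)}[\,\cdot\,]$. The integrability hypothesis in the statement guarantees that every expectation we manipulate is finite and that Fubini-type interchanges are legal.

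Concretely, I would proceed in three short steps. First, because $g(\mathbf{x})$ is $\sigma(\mathbf{x})$-measurable (it is a deterministic function of $\mathbf{x}$), the linearity of $\langle\cdot,\cdot\rangle$ in its second argument yields the pointwise identity
\begin{align}
\langle g(\mathbf{x}),\, \mathbb{E}_{q(\boldsymbol\epsilon\mid \mathbf{x})}[h(\mathbf{x},\boldsymbol\epsilon)\mid \mathbf{x}]\rangle
= \mathbb{E}_{q(\boldsymbol\epsilon\mid \mathbf{x})}[\langle g(\mathbf{x}), h(\mathbf{x},\boldsymbol\epsilon)\rangle \mid \mathbf{x}],
\end{align}
since conditionally on $\mathbf{x}$ the quantity $g(\mathbf{x})$ is a constant vector and can be pulled inside the conditional expectation coordinate-by-coordinate. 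Second, take the outer expectation over $q(\mathbf{x})$ of both sides. Third, apply the tower property to the right-hand side:
\begin{align}
\mathbb{E}_{q(\mathbf{x})}\!\left[\mathbb{E}_{q(\boldsymbol\epsilon\mid \mathbf{x})}[\langle g(\mathbf{x}), h(\mathbf{x},\boldsymbol\epsilon)\rangle \mid \mathbf{x}]\right]
= \mathbb{E}_{q(\mathbf{x},\boldsymbol\epsilon)}[\langle g(\mathbf{x}), h(\mathbf{x},\boldsymbol\epsilon)\rangle],
\end{align}
which is exactly the desired equality.

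There is no real obstacle here; the lemma is essentially a rephrasing of the law of total expectation applied to the scalar random variable $\langle g(\mathbf{x}), h(\mathbf{x},\boldsymbol\epsilon)\rangle$. The only point that requires minor care is justifying the exchange in the first step when $g(\mathbf{x})$ and $h(\mathbf{x},\boldsymbol\epsilon)$ are vector-valued, which follows by expanding the inner product as a finite sum of products of scalar components and invoking linearity of conditional expectation componentwise. The integrability assumption stated in the lemma ensures absolute convergence so none of these manipulations are formal.
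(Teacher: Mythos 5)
Your proof is correct and follows essentially the same route as the paper: moving $g(\mathbf x)$ inside the conditional expectation by linearity and then collapsing the iterated expectation into the joint one, which is exactly what the paper does by writing the conditional expectation as an integral against $q(\boldsymbol\epsilon|\mathbf x)$ and invoking linearity of the integral. The only difference is presentational (tower property language versus explicit density integrals), so nothing further is needed.
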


\begin{proof}[Proof of \Cref{lem:1}]
Note that
\begin{align}
\mathbb E_{q(\mathbf x)}[\langle g(\mathbf x),\mathbb E_{q(\boldsymbol\epsilon|\mathbf x)}[h(\mathbf x,\boldsymbol\epsilon)|\mathbf x]\rangle]&=\int\langle g(\mathbf x),\mathbb E_{q(\boldsymbol\epsilon|\mathbf x)}[h(\mathbf x,\boldsymbol\epsilon)|\mathbf x]\rangle q(\mathbf x)\mathrm d\mathbf x \\
&=\int\langle g(\mathbf x),\int h(\mathbf x,\boldsymbol\epsilon)q(\boldsymbol\epsilon|\mathbf x)\mathrm d\boldsymbol\epsilon\rangle q(\mathbf x)\mathrm d\mathbf x \\
&=\iint\langle g(\mathbf x),h(\mathbf x,\boldsymbol\epsilon)\rangle q(\mathbf x)q(\boldsymbol\epsilon|\mathbf x)\mathrm d\boldsymbol\epsilon\mathrm d\mathbf x\label{eq:linear} \\
&=\mathbb E_{q(\mathbf x,\boldsymbol\epsilon)}[\langle g(\mathbf x),h(\mathbf x,\boldsymbol\epsilon)\rangle],
\end{align}
in which \cref{eq:linear} is by linearity of integral.
\end{proof}

\begin{lemma}\label{lem:2}
The following equality of expectation holds:
\begin{align}
\mathbb E_{\mathbf x}[\boldsymbol\epsilon_\theta(\mathbf x,t)]=\frac{1}{\sigma_t}\mathbb E_{\mathbf x}[\mathbf x]-\frac{\alpha_t}{\sigma_t}\mathbb E_{c,\mathbf x_0,\mathbf x}[\mathbf x_0].
\end{align}
\end{lemma}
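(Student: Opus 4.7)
The plan is to prove this by recognizing that at optimality, the noise prediction model recovers the conditional expectation of the noise given $\mathbf x_t$, and then unfolding that expectation through the forward transition.

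First, I would invoke the definition of $\boldsymbol\epsilon_\theta$ as the minimizer of the denoising objective (or equivalently, via its relation to the unconditional score in the paragraph around \cref{eq:dpm_loss} together with Tweedie's formula). The standard MSE argument gives that the optimal predictor satisfies $\boldsymbol\epsilon_\theta(\mathbf x_t,t)=\mathbb E[\boldsymbol\epsilon\mid\mathbf x_t]$ where the conditioning is over the joint distribution of $(\mathbf x_0,c,\boldsymbol\epsilon)$ inducing $\mathbf x_t=\alpha_t\mathbf x_0+\sigma_t\boldsymbol\epsilon$.

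Next, applying the tower property of conditional expectation gives
\begin{align}
\mathbb E_{\mathbf x}[\boldsymbol\epsilon_\theta(\mathbf x,t)]=\mathbb E_{\mathbf x}[\mathbb E[\boldsymbol\epsilon\mid\mathbf x]]=\mathbb E_{c,\mathbf x_0,\boldsymbol\epsilon}[\boldsymbol\epsilon].
\end{align}
Then I would substitute $\boldsymbol\epsilon=(\mathbf x_t-\alpha_t\mathbf x_0)/\sigma_t$ from the forward transition in \cref{eq:tran} and use linearity of expectation to split the right-hand side into the two claimed terms, noting that $\mathbb E_{c,\mathbf x_0,\boldsymbol\epsilon}[\mathbf x_t]=\mathbb E_{\mathbf x}[\mathbf x]$ once the joint distribution is marginalized.

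There is no real obstacle here; the result is essentially a one-line consequence of the tower property plus the affine forward map. The only mild subtlety worth stating explicitly is the identification of the optimal $\boldsymbol\epsilon_\theta$ with the posterior mean of $\boldsymbol\epsilon$, which is a standard consequence of minimizing the squared-error objective in \cref{eq:dpm_loss} and is consistent with \Cref{rem:cfg}'s characterization in terms of the unconditional score $\nabla_{\mathbf x_t}\log q_t(\mathbf x_t)$.
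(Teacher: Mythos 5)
There is a genuine gap in the middle step. Your collapse $\mathbb E_{\mathbf x}[\mathbb E[\boldsymbol\epsilon\mid\mathbf x]]=\mathbb E_{c,\mathbf x_0,\boldsymbol\epsilon}[\boldsymbol\epsilon]$ via the tower property is valid only if the outer expectation over $\mathbf x$ is taken with respect to the marginal $q_t(\mathbf x_t)$ induced by the training joint $c\sim q(c)$, $\mathbf x_0\sim q_0(\mathbf x_0|c)$, $\boldsymbol\epsilon\sim\mathcal N(\mathbf 0,\mathbf I)$. Under that reading the lemma degenerates: $\mathbb E_{c,\mathbf x_0,\boldsymbol\epsilon}[\boldsymbol\epsilon]=\mathbf 0$, so you have only verified the trivial identity $\mathbf 0=\frac{1}{\sigma_t}\mathbb E[\mathbf x_t]-\frac{\alpha_t}{\sigma_t}\mathbb E[\mathbf x_0]$. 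But the lemma is invoked in the paper with \emph{other} outer laws: in the proof of \Cref{thm:cfg} with $\mathbf x_t\sim q_t(\mathbf x_t|c)$ for a fixed condition $c$, and in the proof of \Cref{thm:generalized_cfg_expectation} with $\tilde{\mathbf x}_t\sim\hat p_\theta(\tilde{\mathbf x}_t|c)$. For those laws the tower collapse fails (the outer law is not the marginal of the joint defining the posterior), and your argument would wrongly suggest $\mathbb E[\boldsymbol\epsilon_\theta(\mathbf x,t)]=\mathbf 0$ always --- exactly the misconception \Cref{thm:cfg} is refuting.

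The missing idea is that the identity must be established \emph{pointwise} in $\mathbf x$ before averaging. The paper first shows $\nabla_{\mathbf x}\log q_t(\mathbf x)=\mathbb E_{q_t(c|\mathbf x)}[\nabla_{\mathbf x}\log q_t(\mathbf x|c)\mid\mathbf x]$, hence
\begin{align}
\boldsymbol\epsilon_\theta(\mathbf x,t)=\mathbb E_{q_t(c,\mathbf x_0|\mathbf x)}\left[\frac{\mathbf x-\alpha_t\mathbf x_0}{\sigma_t}\,\Big|\,\mathbf x\right]=\frac{1}{\sigma_t}\mathbf x-\frac{\alpha_t}{\sigma_t}\mathbb E_{q_t(c,\mathbf x_0|\mathbf x)}[\mathbf x_0\mid\mathbf x],
\end{align}
which holds for every $\mathbf x$; only then is the expectation over $\mathbf x$ taken, so it can be taken under \emph{any} law, with $\mathbb E_{c,\mathbf x_0,\mathbf x}[\mathbf x_0]$ meaning the average of the posterior mean under that law. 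Your first step ($\boldsymbol\epsilon_\theta(\mathbf x_t,t)=\mathbb E[\boldsymbol\epsilon\mid\mathbf x_t]$) is fine and equivalent to the paper's starting point; the fix is to substitute $\boldsymbol\epsilon=(\mathbf x_t-\alpha_t\mathbf x_0)/\sigma_t$ \emph{inside} the conditional expectation and keep the result as a function of $\mathbf x_t$, rather than collapsing the iterated expectation, which silently pins the outer law to $q_t(\mathbf x_t)$ and loses exactly the generality the downstream theorems need.
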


\begin{proof}[Proof of \Cref{lem:2}]
Note that
\begin{align}
\nabla_{\mathbf x}\log q_t(\mathbf x)&=\frac{\nabla_{\mathbf x}q_t(\mathbf x)}{q_t(\mathbf x)} \\
&=\frac{\nabla_{\mathbf x}\int q_t(\mathbf x|c)q(c)\mathrm dc}{q_t(\mathbf x)} \\
&=\frac{\int\nabla_{\mathbf x}q_t(\mathbf x|c)q(c)\mathrm dc}{q_t(\mathbf x)} \\
&=\frac{\int q_t(\mathbf x|c)q(c)\nabla_{\mathbf x}\log q_t(\mathbf x|c)\mathrm dc}{q_t(\mathbf x)} \\
&=\int\frac{q_t(\mathbf x|c)q(c)}{q_t(\mathbf x)}\nabla_{\mathbf x}\log q_t(\mathbf x|c)\mathrm dc \\
&=\mathbb E_{q_t(c|\mathbf x)}[\nabla_{\mathbf x}\log q_t(\mathbf x|c)|\mathbf x].
\end{align}
Therefore, we have
\begin{align}
\boldsymbol\epsilon_\theta(\mathbf x,t)&=\mathbb E_{q_t(c|\mathbf x)}[\boldsymbol\epsilon_\theta(\mathbf x,c,t)|\mathbf x] \\
&=\mathbb E_{q_t(c|\mathbf x)}\left[\mathbb E_{q(\mathbf x_0|\mathbf x,c)}\left[\frac{\mathbf x-\alpha_t\mathbf x_0}{\sigma_t}\right]|\mathbf x\right] \\
&=\mathbb E_{q_t(c,\mathbf x_0|\mathbf x)}\left[\frac{\mathbf x-\alpha_t\mathbf x_0}{\sigma_t}|\mathbf x\right] \\
&=\frac{1}{\sigma_t}\mathbf x-\frac{\alpha_t}{\sigma_t}\mathbb E_{q_t(c,\mathbf x_0|\mathbf x)}[\mathbf x_0|\mathbf x],
\end{align}
and
\begin{align}
\mathbb E_{\mathbf x}[\boldsymbol\epsilon_\theta(\mathbf x,t)]&=\frac{1}{\sigma_t}\mathbb E_{\mathbf x}[\mathbf x]-\frac{\alpha_t}{\sigma_t}\mathbb E_{\mathbf x}[\mathbb E_{q_t(c,\mathbf x_0|\mathbf x)}[\mathbf x_0|\mathbf x]] \\
&=\frac{1}{\sigma_t}\mathbb E_{\mathbf x}[\mathbf x]-\frac{\alpha_t}{\sigma_t}\mathbb E_{c,\mathbf x_0,\mathbf x}[\mathbf x_0].
\end{align}
\end{proof}

Then we start to prove \Cref{thm:cfg}.

\begin{proof}[Proof of \Cref{thm:cfg}]
Similar to derivation in DDIM~\citep{song2020denoising}, first rewrite $J_{\delta,\gamma}$ as below:
\begin{align}
J_{\delta,\gamma}=\mathbb E\left[-\log\hat p_{\theta}(\mathbf x_0|\mathbf x_1,c)+\sum_{t=2}^TD_{KL}(q_\delta(\mathbf x_{t-1}|\mathbf x_t,\mathbf x_0,c)\|\hat p_{\theta}(\mathbf x_{t-1}|\mathbf x_t,c))\right]+C_1,
\end{align}
in which $C_1$ is a constant not involving $\gamma$ and $\theta$.

Note that $\boldsymbol\epsilon_\theta(\mathbf x_t,c,t)=\mathbb E_{q(\boldsymbol\epsilon|\mathbf x_t,c)}[\boldsymbol\epsilon|\mathbf x_t]$.
Hence, for $t>1$:
\begin{align}
&\;\mathbb E_{q(\mathbf x_t,\mathbf x_0|c)}[D_{KL}(q_\delta(\mathbf x_{t-1}|\mathbf x_t,\mathbf x_0,c)\|\hat p_{\theta}(\mathbf x_{t-1}|\mathbf x_t,c))] \\
=&\;\mathbb E_{q(\mathbf x_t,\mathbf x_0|c)}[D_{KL}(q_\delta(\mathbf x_{t-1}|\mathbf x_t,\mathbf x_0,c)\|q_\delta(\mathbf x_{t-1}|\mathbf x_t,\hat{\mathbf f}_{\theta}^t(\mathbf x_t,c),c))] \\
\propto&\;\mathbb E_{q(\mathbf x_t,\mathbf x_0|c)}[\|\mathbf x_0-\hat{\mathbf f}_{\theta}^t(\mathbf x_t,c)\|_2^2] \\
\propto&\;\mathbb E_{\substack{\mathbf x_0\sim q(\mathbf x_0|c)\\\boldsymbol\epsilon\sim\mathcal N(\mathbf 0,\mathbf I)\\\mathbf x_t=\alpha_t\mathbf x_0+\sigma_t\boldsymbol\epsilon}}[\|\boldsymbol\epsilon-(\gamma\boldsymbol\epsilon_\theta(\mathbf x_t,c,t)+(1-\gamma)\boldsymbol\epsilon_\theta(\mathbf x_t,t))\|_2^2] \\
=&\;\mathbb E_{\mathbf x_0,\boldsymbol\epsilon}[\|\gamma(\boldsymbol\epsilon-\boldsymbol\epsilon_\theta(\mathbf x_t,c,t))+(1-\gamma)(\boldsymbol\epsilon-\boldsymbol\epsilon_\theta(\mathbf x_t,t))\|_2^2] \\
=&\;\mathbb E_{\mathbf x_0,\boldsymbol\epsilon}[\gamma^2\|\boldsymbol\epsilon-\boldsymbol\epsilon_\theta(\mathbf x_t,c,t)\|_2^2+(1-\gamma)^2\|\boldsymbol\epsilon-\boldsymbol\epsilon_\theta(\mathbf x_t,t)\|_2^2]\nonumber \\
&\;\qquad+2\gamma(1-\gamma)\mathbb E_{\mathbf x_0,\boldsymbol\epsilon}[\langle\boldsymbol\epsilon-\boldsymbol\epsilon_\theta(\mathbf x_t,c,t),\boldsymbol\epsilon-\boldsymbol\epsilon_\theta(\mathbf x_t,t)\rangle] \\
=&\;\mathbb E_{\mathbf x_0,\boldsymbol\epsilon}[\gamma^2\|\boldsymbol\epsilon-\boldsymbol\epsilon_\theta(\mathbf x_t,c,t)\|_2^2+(1-\gamma)^2\|\boldsymbol\epsilon-\boldsymbol\epsilon_\theta(\mathbf x_t,t)\|_2^2]\nonumber \\
&\;\qquad+2\gamma(1-\gamma)\mathbb E_{\mathbf x_0,\boldsymbol\epsilon}[\langle\boldsymbol\epsilon-\mathbb E_{q(\boldsymbol\epsilon|\mathbf x_t,c)}[\boldsymbol\epsilon|\mathbf x_t],\boldsymbol\epsilon-\boldsymbol\epsilon_\theta(\mathbf x_t,t)\rangle] \\
=&\;\mathbb E_{\mathbf x_0,\boldsymbol\epsilon}[\gamma^2\|\boldsymbol\epsilon-\boldsymbol\epsilon_\theta(\mathbf x_t,c,t)\|_2^2+(1-\gamma)^2\|\boldsymbol\epsilon-\boldsymbol\epsilon_\theta(\mathbf x_t,t)\|_2^2]\nonumber \\
&\;\qquad+2\gamma(1-\gamma)\mathbb E_{\mathbf x_0,\boldsymbol\epsilon}[\langle\boldsymbol\epsilon-\boldsymbol\epsilon,\boldsymbol\epsilon-\boldsymbol\epsilon_\theta(\mathbf x_t,t)\rangle]\label{eq:proof9} \\
=&\;\gamma^2\mathbb E_{\mathbf x_0,\boldsymbol\epsilon}[\|\boldsymbol\epsilon-\boldsymbol\epsilon_\theta(\mathbf x_t,c,t)\|_2^2]+(1-\gamma)^2\mathbb E_{\mathbf x_0,\boldsymbol\epsilon}[\|\boldsymbol\epsilon-\boldsymbol\epsilon_\theta(\mathbf x_t,t)\|_2^2],
\end{align}
in which \cref{eq:proof9} is from \Cref{lem:1}.
As for $t=1$ we have similar derivation:
\begin{align}
&\;\mathbb E_{q(\mathbf x_1,\mathbf x_0|c)}[-\log\hat p_{\theta}(\mathbf x_0|\mathbf x_1,c))] \\
\propto&\;\mathbb E_{q(\mathbf x_1,\mathbf x_0|c)}[\|\mathbf x_0-\hat{\mathbf f}_{\theta}^t(\mathbf x_1,c)\|_2^2]+C_2 \\
\propto&\;\mathbb E_{\substack{\mathbf x_0\sim q(\mathbf x_0|c)\\\boldsymbol\epsilon\sim\mathcal N(\mathbf 0,\mathbf I)\\\mathbf x_1=\alpha_1\mathbf x_0+\sigma_1\boldsymbol\epsilon}}[\|\boldsymbol\epsilon-(\gamma\boldsymbol\epsilon_\theta(\mathbf x_1,c,1)+(1-\gamma)\boldsymbol\epsilon_\theta(\mathbf x_1,1))\|_2^2]+C_3 \\
=&\;\gamma^2\mathbb E_{\mathbf x_0,\boldsymbol\epsilon}[\|\boldsymbol\epsilon-\boldsymbol\epsilon_\theta(\mathbf x_1,c,1)\|_2^2]+(1-\gamma)^2\mathbb E_{\mathbf x_0,\boldsymbol\epsilon}[\|\boldsymbol\epsilon-\boldsymbol\epsilon_\theta(\mathbf x_1,1)\|_2^2]+C_3,
\end{align}
in which $C_2$ and $C_3$ are constants not involving $\gamma$ and $\theta$.
Given that CFG involves score matching using both conditional and unconditional distributions, and that $J_{\delta,\gamma}$ is proportional to the score matching objective up to a constant, we confirm the equivalence between $J_{\delta,\gamma}$ and objective of native DPM under CFG.

Note that in native PF-ODE, we have
\begin{align}
\frac{\mathrm d\mathbf x_t}{\mathrm dt}&=f_t\mathbf x_t-\frac{1}{2}g^2_t\nabla_{\mathbf x_t}\log q_t(\mathbf x_t|c), \\
\mathbb E_{q_t(\mathbf x_t|c)}[\nabla_{\mathbf x_t}\log q_t(\mathbf x_t|c)]&=\mathbb E_{q_t(\mathbf x_t|c)}[\mathbb E_{q_t(\mathbf x_0|\mathbf x_t,c)}[\nabla_{\mathbf x_t}\log q_t(\mathbf x_t|\mathbf x_0,c)]] \\
&=\mathbb E_{q_t(\mathbf x_0,\mathbf x_t|c)}[\nabla_{\mathbf x_t}\log q_t(\mathbf x_t|\mathbf x_0,c)] \\
&=0\label{eq:proof13},
\end{align}
in which \cref{eq:proof13} holds since forward diffusion process $q_t(\mathbf x_t|\mathbf x_0,c)$ is implemented by adding Gaussian noise.
However, according to \cref{eq:pf_ode} and \Cref{lem:2}, we have
\begin{align}
\mathbb E_{\mathbf x_t}[s_{t,\gamma}(\mathbf x_t,c)]&=\mathbb E_{\mathbf x_t}[\gamma\nabla_{\mathbf x_t}\log q_t(\mathbf x_t|c)+(1-\gamma)\nabla_{\mathbf x_t}\log q_t(\mathbf x_t)] \\
&=(1-\gamma)\mathbb E_{\mathbf x_t}[\nabla_{\mathbf x_t}\log q_t(\mathbf x_t)] \\
&=\frac{\gamma-1}{\sigma_t^2}(\mathbb E_{\mathbf x_t}[\mathbf x_t]-\alpha_t\mathbb E_{c,\mathbf x_0,\mathbf x_t}[\mathbf x_0]) \\
&=\frac{\gamma-1}{\sigma_t^2}(\mathbb E_{q_t(\mathbf x_t|c)}[\mathbf x_t]-\alpha_t\mathbb E_{q_0(\mathbf x_0,c)}[\mathbf x_0]).
\end{align}
Note that $\mathbb E_{q_t(\mathbf x_t|c)}[\mathbf x_t]=\alpha_t\mathbb E_{q_0(\mathbf x_0|c)}[\mathbf x_0]$, and that $\mathbb E_{\mathbf x_0,c}[\mathbf x_0]=\int\mathbb E_{q_0(\mathbf x_0|c)}[\mathbf x_0]\mathrm dc$.
Therefore when $\gamma\neq1$, $\mathbb E_{\mathbf x_t}[s_{t,\gamma}(\mathbf x_t,c)]$ is not guaranteed to be identical with 0.
In other words, denoising with CFG cannot be expressed as a reciprocal of diffusion process with Gaussian noise.
\end{proof}

\subsection{Proof of \Cref{thm:gs_shift}}\label{subsec:proof.1}

\begin{proof}
Given \cref{eq:cfg}, for $\gamma>1$, we have
\begin{align}
s_{t,\gamma}(\mathbf x_t,c)&=\gamma\nabla_{\mathbf x_t}\log q_{t}(\mathbf x_t|c)+(1-\gamma)\nabla_{\mathbf x_t}\log q_{t}(\mathbf x_t) \\
&=-\gamma\frac{\mathbf x_t-c}{t+1}-(1-\gamma)\frac{\mathbf x_t}{t+2}, \\
\frac{\mathrm d\mathbf x_t}{\mathrm dt}&=-\frac{1}{2}s_{t,\gamma}(\mathbf x_t,c) \\
&=\mathbf x_t\left(\frac{\gamma}{2(t+1)}+\frac{1-\gamma}{2(t+2)}\right)-c\frac{\gamma}{2(t+1)}. \label{eq:proof1}
\end{align}
By variation of constants formula, we can analytically solve $q_{0,\gamma}^{\mathrm{deter}}(\mathbf x_0|c)$ in \cref{eq:proof1}.
\begin{align}
\mathbf x_t&=e^{\int_T^t\frac{\gamma}{2(s+1)}+\frac{1-\gamma}{2(s+2)}\mathrm ds}\left(C-\int_T^tc\frac{\gamma}{2(s+1)}e^{-\int_s^t\frac{\gamma}{2(r+1)}+\frac{1-\gamma}{2(r+2)}\mathrm dr}\mathrm ds\right) \\
&=(t+1)^{\frac{\gamma}{2}}(t+2)^{\frac{1-\gamma}{2}}\left(C-c\frac{\gamma}{2}\int_T^t(s+1)^{-\frac{\gamma+2}{2}}(s+2)^{-\frac{1-\gamma}{2}}\mathrm ds\right),
\end{align}
in which $C$ is a constant to determine.
Let $t=T$, we can see that
\begin{align}
C=\frac{\mathbf x_T}{(T+1)^{\frac{\gamma}{2}}(T+2)^{\frac{1-\gamma}{2}}}.
\end{align}
Therefore, we achieve the closed-form formula for $q_{0,\gamma}^{\mathrm{deter}}(\mathbf x_0|c)$ as below:
\begin{align}
\mathbf x_0=2^{\frac{1-\gamma}{2}}\left(\frac{\mathbf x_T}{(T+1)^{\frac{\gamma}{2}}(T+2)^{\frac{1-\gamma}{2}}}+c\frac{\gamma}{2}\int_0^T(s+1)^{-\frac{\gamma+2}{2}}(s+2)^{-\frac{1-\gamma}{2}}\mathrm ds\right).
\end{align}
Since $q_T(\mathbf x_T|c)\sim\mathcal N(c,T+1)$, we can deduce that
\begin{align}
q_{0,\gamma}^{\mathrm{deter}}(\mathbf x_0|c)\sim\mathcal N\left(c\phi(\gamma,T),2^{1-\gamma}\frac{T+1}{(T+1)^\gamma(T+2)^{1-\gamma}}\right),
\end{align}
in which
\begin{align}
\phi(\gamma,T)=2^{\frac{1-\gamma}{2}}\left(\frac{1}{(T+1)^{\frac{\gamma}{2}}(T+2)^{\frac{1-\gamma}{2}}}+\frac{\gamma}{2}\int_0^T(s+1)^{-\frac{\gamma+2}{2}}(s+2)^{-\frac{1-\gamma}{2}}\mathrm ds\right).
\end{align}

It is obvious that
\begin{align}
\phi(\gamma)&=2^{\frac{1-\gamma}{2}}\frac{\gamma}{2}\int_0^{+\infty}(s+1)^{-\frac{\gamma+2}{2}}(s+2)^{-\frac{1-\gamma}{2}}\mathrm ds, \\
&\lim_{T\rightarrow+\infty}\frac{T+1}{(T+1)^\gamma(T+2)^{1-\gamma}}=1.
\end{align}
Then it is suffices to calculate $\phi(\gamma)$ for all $\gamma>1$.
First note that
\begin{align}
\phi(1)&=\frac{1}{2}\int_0^{+\infty}(s+1)^{-\frac{3}{2}}\mathrm ds=1,\label{eq:proof4} \\
\phi(3)&=2^{-1}\frac{3}{2}\int_0^{+\infty}(s+1)^{-\frac{5}{2}}(s+2)\mathrm ds=2,\label{eq:proof5} \\
\phi(5)&=2^{-2}\frac{5}{2}\int_0^{+\infty}(s+1)^{-\frac{7}{2}}(s+2)^2\mathrm ds=\frac{7}{3}\label{eq:proof6}.
\end{align}
For $\gamma>1$, denote by $I(\gamma)$ with
\begin{align}
I(\gamma)=\int_0^{+\infty}(s+1)^{-\frac{\gamma+2}{2}}(s+2)^{-\frac{1-\gamma}{2}}\mathrm ds.
\end{align}
Note that for $\gamma>1$ we have
\begin{align}
I(\gamma)&=\int_0^{+\infty}(s+1)^{-\frac{\gamma+2}{2}}(s+2)^{-\frac{1-\gamma}{2}}\mathrm ds \\
&=\frac{2}{\gamma+1}\int_0^{+\infty}(s+1)^{-\frac{\gamma+2}{2}}\mathrm d(s+2)^{\frac{\gamma+1}{2}} \\
&=\frac{2}{\gamma+1}\left((s+1)^{-\frac{\gamma+1}{2}}(s+2)^{\frac{\gamma+1}{2}}\Big|_0^{+\infty}+\frac{\gamma+2}{2}\int_0^{+\infty}(s+1)^{-\frac{\gamma+4}{2}}(s+2)^{\frac{\gamma+1}{2}}\mathrm ds\right) \\
&=\frac{2}{\gamma+1}\left(\frac{\gamma+2}{2}I(\gamma+2)-2^{\frac{\gamma+1}{2}}\right).
\end{align}
Therefore, for $\gamma>1$ we have
\begin{align}
\phi(\gamma)&=\frac{2\gamma}{\gamma+1}(\phi(\gamma+2)-1), \\
\phi(\gamma+2)&=1+\frac{\gamma+1}{2\gamma}\phi(\gamma)\label{eq:proof_step}.
\end{align}
From \cref{eq:proof4,eq:proof5,eq:proof6} we have
\begin{align}
I(1)=2,\quad I(3)=\frac{8}{3},\quad I(5)=\frac{56}{15}.
\end{align}
For $\gamma\in[1,3]$, by Cauchy-Schwarz inequality with $p\in[0,1]$, we have
\begin{align}
&\Big(I(\gamma)\Big)^p\Big(I(5)\Big)^{1-p} \\
=&\left(\int_0^{+\infty}(s+1)^{-\frac{\gamma+2}{2}}(s+2)^{-\frac{1-\gamma}{2}}\mathrm ds\right)^p\left(\int_0^{+\infty}(s+1)^{-\frac{7}{2}}(s+2)^2\mathrm ds\right)^{1-p} \\
\geqslant&\int_0^{+\infty}\Big((s+1)^{-\frac{\gamma+2}{2}}(s+2)^{-\frac{1-\gamma}{2}}\Big)^p\Big((s+1)^{-\frac{7}{2}}(s+2)^2\Big)^{1-p}\mathrm ds \\
=&\int_0^{+\infty}(s+1)^{-\frac{\gamma p-5p+7}{2}}(s+2)^{-\frac{5p-\gamma p-4}{2}}\mathrm ds\label{eq:proof7}.
\end{align}
Let $p=\frac{2}{5-\gamma}\in[0,1]$ for $\gamma\in[1,3]$, from \cref{eq:proof7} we have
\begin{align}
I(\gamma)\geqslant\Big(I(3)\Big)^{\frac{5-\gamma}{2}}\Big(I(5)\Big)^{\frac{\gamma-3}{2}}=\left(\frac{8}{3}\right)^{\frac{5-\gamma}{2}}\left(\frac{56}{15}\right)^{\frac{\gamma-3}{2}}.
\end{align}
Therefore for $\gamma\in[1,3]$, we have
\begin{align}
\phi(\gamma)\geqslant2^{\frac{1-\gamma}{2}}\frac{\gamma}{2}\left(\frac{8}{3}\right)^{\frac{5-\gamma}{2}}\left(\frac{56}{15}\right)^{\frac{\gamma-3}{2}}=\gamma\frac{7}{15}\left(\frac{10}{7}\right)^{\frac{5-\gamma}{2}}=:h_1(\gamma)
\end{align}
Since $\frac{1}{\gamma}-\frac{1}{2}\log\frac{10}{7}>0$ for $\gamma\in[1,3]$, $h_1(\gamma)$ increases monotonically on $\in[1,3]$ and $h_1(1)=\frac{20}{21}$, $h_1(3)=2$.
Similarly, for $\gamma\in[3,5]$, by Cauchy-Schwarz inequality with $p\in[0,1]$, we have
\begin{align}
&\Big(I(1)\Big)^{1-p}\Big(I(\gamma)\Big)^p \\
=&\left(\int_0^{+\infty}(s+1)^{-\frac{3}{2}}\mathrm ds\right)^{1-p}\left(\int_0^{+\infty}(s+1)^{-\frac{\gamma+2}{2}}(s+2)^{-\frac{1-\gamma}{2}}\mathrm ds\right)^p \\
\geqslant&\int_0^{+\infty}\Big((s+1)^{-\frac{3}{2}}\Big)^{1-p}\Big((s+1)^{-\frac{\gamma+2}{2}}(s+2)^{-\frac{1-\gamma}{2}}\Big)^p\mathrm ds \\
=&\int_0^{+\infty}(s+1)^{-\frac{3-p+\gamma p}{2}}(s+2)^{-\frac{p-\gamma p}{2}}\mathrm ds\label{eq:proof8}.
\end{align}
Let $p=\frac{2}{\gamma-1}\in[0,1]$ for $\gamma\in[3,5]$, from \cref{eq:proof8} we have
\begin{align}
I(\gamma)\geqslant\Big(I(1)\Big)^{\frac{3-\gamma}{2}}\Big(I(3)\Big)^{\frac{\gamma-1}{2}}=2^{\frac{3-\gamma}{2}}\left(\frac{8}{3}\right)^{\frac{\gamma-1}{2}}.
\end{align}
Therefore for $\gamma\in[3,5]$
\begin{align}
\phi(\gamma)\geqslant2^{\frac{1-\gamma}{2}}\frac{\gamma}{2}2^{\frac{3-\gamma}{2}}\left(\frac{8}{3}\right)^{\frac{\gamma-1}{2}}=\gamma\left(\frac{2}{3}\right)^{\frac{\gamma-1}{2}}=:h_2(\gamma)
\end{align}
It is easy to see that $h_2(\gamma)\geqslant2$ for $\gamma\in[3,5]$.
Then by mathematical induction and \cref{eq:proof_step}, we have $\phi(\gamma)\geqslant2$ for all $\gamma\geqslant3$.
Specially, we have
\begin{align}
\lim_{\gamma\rightarrow+\infty}\phi(\gamma)=2.
\end{align}

And specifically, for $\gamma\in\mathbb N$, $\gamma>1$, we analytically calculate $\phi(\gamma,T)$ for $\gamma=2n+1$ and $\gamma=2n$, respectively.
First let $\gamma=2n+1$, $n\in\mathbb N$.
We can see that
\begin{align}
(s+2)^{-\frac{1-\gamma}{2}}=(s+2)^n=\sum_{k=0}^nC_n^k(s+1)^k.
\end{align}
\begin{align}
\int_0^T(s+1)^{-\frac{\gamma+2}{2}}(s+2)^{-\frac{1-\gamma}{2}}\mathrm ds&=\int_0^T(s+1)^{-\frac{\gamma+2}{2}}\left(\sum_{k=0}^nC_n^k(s+1)^k\right)\mathrm ds \\
&=\sum_{k=0}^n\left(C_n^k\int_0^T(s+1)^{\frac{2k-\gamma-2}{2}}\mathrm ds\right) \\
&=\sum_{k=0}^n\left(C_n^k\frac{2}{2k-\gamma}\Big((T+1)^{\frac{2k-\gamma}{2}}-1\Big)\right).
\end{align}
Since $2k-\gamma<0$ for $k=0,1,\cdots,n$, we have $(T+1)^{\frac{2k-\gamma}{2}}\rightarrow0$ as $T$ goes to infinity and hence
\begin{align}
&\;\phi(\gamma,T) \\
=&\;2^{\frac{1-\gamma}{2}}\left(\frac{1}{(T+1)^{\frac{\gamma}{2}}(T+2)^{\frac{1-\gamma}{2}}}+\frac{\gamma}{2}\int_0^T(s+1)^{-\frac{\gamma+2}{2}}(s+2)^{-\frac{1-\gamma}{2}}\mathrm ds\right) \\
=&\;2^{\frac{1-\gamma}{2}}\left(\frac{1}{(T+1)^{\frac{\gamma}{2}}(T+2)^{\frac{1-\gamma}{2}}}+\frac{\gamma}{2}\left(\sum_{k=0}^nC_n^k\frac{2}{2k-\gamma}\Big((T+1)^{\frac{2k-\gamma}{2}}-1\Big)\right)\right) \\
=&\;2^{\frac{1-\gamma}{2}}\left(\frac{1}{(T+1)^{\frac{\gamma}{2}}(T+2)^{\frac{1-\gamma}{2}}}+\left(\sum_{k=0}^nC_n^k\frac{2n+1}{2n-2k+1}\Big(1-(T+1)^{\frac{2k-\gamma}{2}}\Big)\right)\right).
\end{align}
When $T\rightarrow+\infty$, we have
\begin{align}
\phi(2n+1)=2^{-n}\left(\sum_{k=0}^nC_n^k\frac{2n+1}{2n-2k+1}\right).
\end{align}

Then let $\gamma=2n$, $n\in\mathbb N$, and $n\geqslant1$.
We have
\begin{align}
&\;\int(s+1)^{-\frac{\gamma+2}{2}}(s+2)^{-\frac{1-\gamma}{2}}\mathrm ds \\
=&\;\int2(s+1)^{-n-1}(\sqrt{s+2})^{2n}\mathrm d\sqrt{s+2} \\
=&\;\int2(u^2-1)^{-n-1}u^{2n}\mathrm du\label{eq:proof2} \\
=&\;-\frac{1}{n}u^{2n-1}(u^2-1)^{-n}+\frac{2n-1}{n}\int(u^2-1)^{-n}u^{2n-2}\mathrm du\label{eq:proof3},
\end{align}
in which \cref{eq:proof2} is due to integration by substitution with $u=\sqrt{s+2}>1$, and \cref{eq:proof3} is due to integration by parts.
Denote by $I_n$ with
\begin{align}
I_n=\int2(u^2-1)^{-n-1}u^{2n}\mathrm du,\quad n\geqslant0,
\end{align}
then we have
\begin{align}
I_n=-\frac{1}{n}u^{2n-1}(u^2-1)^{-n}+\frac{2n-1}{2n}I_{n-1},\quad n\geqslant1.
\end{align}
For $n\geqslant1$, let $I_n=\frac{(2n-1)!!}{(2n)!!}A_n$, then we have
\begin{align}
\frac{(2n-1)!!}{(2n)!!}A_n&=-\frac{1}{n}u^{2n-1}(u^2-1)^{-n}+\frac{2n-1}{2n}\frac{(2n-3)!!}{(2n-2)!!}A_{n-1},\quad n\geqslant2, \\
A_n&=A_{n-1}-\frac{1}{n}\frac{(2n)!!}{(2n-1)!!}u^{2n-1}(u^2-1)^{-n},\quad n\geqslant2.
\end{align}
Therefore for $n\geqslant2$, we have
\begin{align}
A_n=A_1-\sum_{k=2}^n\frac{1}{k}\frac{(2k)!!}{(2k-1)!!}u^{2k-1}(u^2-1)^{-k},
\end{align}
and
\begin{align}
I_n=
\begin{cases}
\mathlarger{-\frac{u}{u^2-1}+\frac{1}{2}\log\frac{u-1}{u+1}}, & n=1, \\[20pt]
\mathlarger{\frac{(2n-1)!!}{(2n)!!}\left(-\frac{2u}{u^2-1}+\log\frac{u-1}{u+1}-\sum_{k=2}^n\frac{1}{k}\frac{(2k)!!}{(2k-1)!!}\frac{u^{2k-1}}{(u^2-1)^{k}}\right)}, & n\geqslant2.
\end{cases}
\end{align}
Therefore, for $\gamma=2$ we have
\begin{align}
\phi(\gamma,T)&=2^{\frac{1-\gamma}{2}}\left(\frac{1}{(T+1)^{\frac{\gamma}{2}}(T+2)^{\frac{1-\gamma}{2}}}+\frac{\gamma}{2}\int_0^T(s+1)^{-\frac{\gamma+2}{2}}(s+2)^{-\frac{1-\gamma}{2}}\mathrm ds\right) \\
&=2^{\frac{1-\gamma}{2}}\frac{1}{(T+1)^{\frac{\gamma}{2}}(T+2)^{\frac{1-\gamma}{2}}}\nonumber \\
&\qquad-2^{\frac{1-\gamma}{2}}\frac{\gamma}{2}\left(\frac{\sqrt{T+2}}{T+1}-\sqrt2\right)\nonumber \\
&\qquad\qquad+2^{\frac{1-\gamma}{2}}\frac{\gamma}{2}\frac{1}{2}\left(\log\frac{\sqrt{T+2}-1}{\sqrt{T+2}+1}-\log\frac{\sqrt2-1}{\sqrt2+1}\right),
\end{align}
and for $\gamma\geqslant4$ we have
\begin{align}
&\;\phi(\gamma,T) \\
=&\;2^{\frac{1-\gamma}{2}}\left(\frac{1}{(T+1)^{\frac{\gamma}{2}}(T+2)^{\frac{1-\gamma}{2}}}+\frac{\gamma}{2}\int_0^T(s+1)^{-\frac{\gamma+2}{2}}(s+2)^{-\frac{1-\gamma}{2}}\mathrm ds\right) \\
=&\;2^{\frac{1-\gamma}{2}}\frac{1}{(T+1)^{\frac{\gamma}{2}}(T+2)^{\frac{1-\gamma}{2}}}\nonumber \\
&\;\qquad-2^{\frac{1-\gamma}{2}}\frac{\gamma}{2}\frac{(2n-1)!!}{(2n)!!}\left(\frac{2\sqrt{T+2}}{T+1}-2\sqrt2\right)\nonumber \\
&\;\qquad\qquad+2^{\frac{1-\gamma}{2}}\frac{\gamma}{2}\frac{(2n-1)!!}{(2n)!!}\left(\log\frac{\sqrt{T+2}-1}{\sqrt{T+2}+1}-\log\frac{\sqrt2-1}{\sqrt2+1}\right)\nonumber \\
&\;\qquad\qquad\qquad-2^{\frac{1-\gamma}{2}}\frac{\gamma}{2}\frac{(2n-1)!!}{(2n)!!}\left(\sum_{k=2}^n\frac{1}{k}\frac{(2k)!!}{(2k-1)!!}\Big(\frac{(T+2)^{\frac{2k-1}{2}}}{(T+1)^{k}}-2^{\frac{2k-1}{2}}\Big)\right).
\end{align}
When $T\rightarrow+\infty$, we have
\begin{align}
\phi(2n)=
\begin{cases}
\mathlarger{2^{\frac{1}{2}-n}n\left(\sqrt2-\frac{1}{2}\log\frac{\sqrt2-1}{\sqrt2+1}\right)}, & n=1, \\[20pt]
\mathlarger{\frac{(2n-1)!!\sqrt2n}{(2n)!!2^n}\left(\left(\sum_{k=2}^n\frac{1}{k}\frac{(2k)!!}{(2k-1)!!}2^{k-\frac{1}{2}}\right)+2\sqrt2-\log\frac{\sqrt2-1}{\sqrt2+1}\right)}, & n\geqslant2.
\end{cases}
\end{align}
\end{proof}

\subsection{Proof of \Cref{thm:generalized_cfg_expectation}}\label{subsec:proof.3}

\begin{proof}
We first write the closed-form expressions of DDIM sampler as below:
\begin{align}
\mathbf x_{t-1}&=\frac{\alpha_{t-1}}{\alpha_t}\mathbf x_t+(\sigma_{t-1}-\frac{\alpha_{t-1}}{\alpha_t}\sigma_t)\boldsymbol\epsilon_\theta(\mathbf x_t,c,t), \\
\tilde{\mathbf x}_{t-1}&=\frac{\alpha_{t-1}}{\alpha_t}\tilde{\mathbf x}_t+(\sigma_{t-1}-\frac{\alpha_{t-1}}{\alpha_t}\sigma_t)(\gamma_1\boldsymbol\epsilon_\theta(\tilde{\mathbf x}_t,c,t)+\gamma_0\boldsymbol\epsilon_\theta(\tilde{\mathbf x}_t,t)).
\end{align}
Then we have
\begin{align}
\Delta_{t-1}&=\mathbb E_{\mathbf x_t}[\mathbf x_{t-1}]-\mathbb E_{\tilde{\mathbf x}_t}[\tilde{\mathbf x}_{t-1}] \\
&=\frac{\alpha_{t-1}}{\alpha_t}(\mathbb E_{\mathbf x_t}[\mathbf x_t]-\mathbb E_{\tilde{\mathbf x}_t}[\tilde{\mathbf x}_t])\nonumber \\
&\qquad+(\sigma_{t-1}-\frac{\alpha_{t-1}}{\alpha_t}\sigma_t)(\mathbb E_{\mathbf x_t}[\boldsymbol\epsilon_\theta(\mathbf x_t,c)]-\mathbb E_{\tilde{\mathbf x}_t}[\gamma_1\boldsymbol\epsilon_\theta(\tilde{\mathbf x}_t,c,t)+\gamma_0\boldsymbol\epsilon_\theta(\tilde{\mathbf x}_t,t)]).
\end{align}
Note that
\begin{align}
\boldsymbol\epsilon_\theta(\mathbf x_t,c,t)=\mathbb E_{q(\mathbf x_0|\mathbf x_t,c)}\left[\frac{\mathbf x_t-\alpha_t\mathbf x_0}{\sigma_t}|\mathbf x_t\right],\quad\boldsymbol\epsilon_\theta(\tilde{\mathbf x}_t,c,t)=\mathbb E_{q(\mathbf x_0|\tilde{\mathbf x}_t,c)}\left[\frac{\tilde{\mathbf x}_t-\alpha_t\mathbf x_0}{\sigma_t}|\tilde{\mathbf x}_t\right].
\end{align}
Therefore, by $q_t(\mathbf x_t|c)=\int q_0(\mathbf x_0|c)q_{0t}(\mathbf x_t|\mathbf x_0,c)\mathrm d\mathbf x_0$ and \Cref{lem:1} we have
\begin{align}\label{eq:cond_exp_gt}
\mathbb E_{\mathbf x_t}[\boldsymbol\epsilon_\theta(\mathbf x_t,c,t)]=\mathbb E_{\mathbf x_0,\mathbf x_t}\left[\frac{\mathbf x_t-\alpha_t\mathbf x_0}{\sigma_t}\right]=\frac{1}{\sigma_t}\mathbb E_{\mathbf x_t}[\mathbf x_t]-\frac{\alpha_t}{\sigma_t}\mathbb E_{\mathbf x_0}[\mathbf x_0].
\end{align}
Similarly, by $\hat p_{\theta}(\tilde{\mathbf x}_t|c)=\int q_0(\mathbf x_0|c)q_{0T}(\mathbf x_T|\mathbf x_0,c)\hat p_{\theta}(\tilde{\mathbf x}_t|\mathbf x_T,c)\mathrm d\mathbf x_0\mathrm d\mathbf x_T$ we have
\begin{align}\label{eq:cond_exp_ddim}
\mathbb E_{\tilde{\mathbf x}_t}[\boldsymbol\epsilon_\theta(\tilde{\mathbf x}_t,c,t)]=\mathbb E_{\mathbf x_0,\tilde{\mathbf x}_t}\left[\frac{\tilde{\mathbf x}_t-\alpha_t\mathbf x_0}{\sigma_t}\right]=\frac{1}{\sigma_t}\mathbb E_{\tilde{\mathbf x}_t}[\tilde{\mathbf x}_t]-\frac{\alpha_t}{\sigma_t}\mathbb E_{\mathbf x_0}[\mathbf x_0].
\end{align}
Then we can simplify $\Delta_t$ as below:
\begin{align}
\Delta_{t-1}&=\frac{\alpha_{t-1}}{\alpha_t}\Delta_t+(\sigma_{t-1}-\frac{\alpha_{t-1}}{\alpha_t}\sigma_t)(\frac{1}{\sigma_t}\Delta_t-\mathbb E_{\tilde{\mathbf x}_t}[(\gamma_1-1)\boldsymbol\epsilon_\theta(\tilde{\mathbf x}_t,c,t)+\gamma_0\boldsymbol\epsilon_\theta(\tilde{\mathbf x}_t,t)]) \\
&=\frac{\sigma_{t-1}}{\sigma_t}\Delta_t-(\sigma_{t-1}-\frac{\alpha_{t-1}}{\alpha_t}\sigma_t)\mathbb E_{\tilde{\mathbf x}_t}[(\gamma_1-1)\boldsymbol\epsilon_\theta(\tilde{\mathbf x}_t,c,t)+\gamma_0\boldsymbol\epsilon_\theta(\tilde{\mathbf x}_t,t)]) \\
&=\frac{\sigma_{t-1}}{\sigma_t}\Delta_t-(\sigma_{t-1}-\frac{\alpha_{t-1}}{\alpha_t}\sigma_t)\mathbb E_{\tilde{\mathbf x}_t}[\boldsymbol\epsilon_{\gamma_1,\gamma_0}(\tilde{\mathbf x}_t)]).
\end{align}
$\Delta_t=0$ implies that $\mathbb E_{q_t(\mathbf x_t|c)}[\mathbf x_t]=\mathbb E_{\hat p_{\theta}(\tilde{\mathbf x}_t|c)}[\tilde{\mathbf x}_t]$.
Therefore, by \cref{eq:cond_exp_gt,eq:cond_exp_ddim} we have $\mathbb E_{\mathbf x_t}[\boldsymbol\epsilon_\theta(\mathbf x_t,c,t)]=\mathbb E_{\tilde{\mathbf x}_t}[\boldsymbol\epsilon_\theta(\tilde{\mathbf x}_t,c,t)]$.
According to \Cref{lem:2} and by calculating the expectation over $\mathbf x_t$ and $\tilde{\mathbf x}_t$ respectively, we have
\begin{align}
\mathbb E_{\tilde{\mathbf x}_t}[\boldsymbol\epsilon_\theta(\tilde{\mathbf x}_t,t)]&=\frac{1}{\sigma_t}\mathbb E_{\tilde{\mathbf x}_t}[\tilde{\mathbf x}_t]-\frac{\alpha_t}{\sigma_t}\mathbb E_{c,\mathbf x_0,\tilde{\mathbf x}_t}[\mathbf x_0]=\frac{1}{\sigma_t}\mathbb E_{\tilde{\mathbf x}_t}[\tilde{\mathbf x}_t]-\frac{\alpha_t}{\sigma_t}\mathbb E_{c,\mathbf x_0}[\mathbf x_0], \\
\mathbb E_{\mathbf x_t}[\boldsymbol\epsilon_\theta(\mathbf x_t,t)]&=\frac{1}{\sigma_t}\mathbb E_{\tilde{\mathbf x}_t}[\mathbf x_t]-\frac{\alpha_t}{\sigma_t}\mathbb E_{c,\mathbf x_0,\mathbf x_t}[\mathbf x_0]=\frac{1}{\sigma_t}\mathbb E_{\mathbf x_t}[\mathbf x_t]-\frac{\alpha_t}{\sigma_t}\mathbb E_{c,\mathbf x_0}[\mathbf x_0].
\end{align}
Since $\Delta_t=0$, we have $\mathbb E_{\mathbf x_t}[\boldsymbol\epsilon_\theta(\mathbf x_t,t)]=\mathbb E_{\tilde{\mathbf x}_t}[\boldsymbol\epsilon_\theta(\tilde{\mathbf x}_t,t)]$, and thus
\begin{align}
\Delta_{t-1}&=-(\sigma_{t-1}-\frac{\alpha_{t-1}}{\alpha_t}\sigma_t)\mathbb E_{\mathbf x_t}[(\gamma_1-1)\boldsymbol\epsilon_\theta(\mathbf x_t,c,t)+\gamma_0\boldsymbol\epsilon_\theta(\mathbf x_t,t)]) \\
&=-(\sigma_{t-1}-\frac{\alpha_{t-1}}{\alpha_t}\sigma_t)\mathbb E_{\mathbf x_t}[\boldsymbol\epsilon_{\gamma_1,\gamma_0}(\mathbf x_t)].
\end{align}
\end{proof}

\subsection{Proof of \Cref{thm:generalized_cfg_variance}}\label{subsec:proof.4}

\begin{proof}
Given \cref{eq:generalized_cfg}, for any $\gamma_1$ and $\gamma_0$, we have
\begin{align}
s_{t,\gamma_1,\gamma_0}(\mathbf x_t,c)&=\gamma_1\nabla_{\mathbf x_t}\log q_{t}(\mathbf x_t|c)+\gamma_0\nabla_{\mathbf x_t}\log q_{t}(\mathbf x_t) \\
&=-\gamma_1\frac{\mathbf x_t-c}{t+1}-\gamma_0\frac{\mathbf x_t}{t+2}, \\
\frac{\mathrm d\mathbf x_t}{\mathrm dt}&=-\frac{1}{2}s_{t,\gamma_1,\gamma_0}(\mathbf x_t,c) \\
&=\mathbf x_t\left(\frac{\gamma_1}{2(t+1)}+\frac{\gamma_0}{2(t+2)}\right)-c\frac{\gamma_1}{2(t+1)}. \label{eq:proof11}
\end{align}
By variation of constants formula, we can analytically solve $q_{0,\gamma_1,\gamma_0}^{\mathrm{deter}}(\mathbf x_0|c)$ in \cref{eq:proof11}.
\begin{align}
\mathbf x_t&=e^{\int_T^t\frac{\gamma_1}{2(s+1)}+\frac{\gamma_0}{2(s+2)}\mathrm ds}\left(C-\int_T^tc\frac{\gamma_1}{2(s+1)}e^{-\int_s^t\frac{\gamma_1}{2(r+1)}+\frac{\gamma_0}{2(r+2)}\mathrm dr}\mathrm ds\right) \\
&=(t+1)^{\frac{\gamma_1}{2}}(t+2)^{\frac{\gamma_0}{2}}\left(C-c\frac{\gamma_1}{2}\int_T^t(s+1)^{-\frac{\gamma_1+2}{2}}(s+2)^{-\frac{\gamma_0}{2}}\mathrm ds\right),
\end{align}
in which $C$ is a constant to determine.
Let $t=T$, we can see that
\begin{align}
C=\frac{\mathbf x_T}{(T+1)^{\frac{\gamma_1}{2}}(T+2)^{\frac{\gamma_0}{2}}}.
\end{align}
Therefore, we achieve the closed-form formula for $q_{0,\gamma}^{\mathrm{deter}}(\mathbf x_0|c)$ as below:
\begin{align}
\mathbf x_0=2^{\frac{\gamma_0}{2}}\left(\frac{\mathbf x_T}{(T+1)^{\frac{\gamma_1}{2}}(T+2)^{\frac{\gamma_0}{2}}}+c\frac{\gamma_1}{2}\int_0^T(s+1)^{-\frac{\gamma_1+2}{2}}(s+2)^{-\frac{\gamma_0}{2}}\mathrm ds\right).
\end{align}
Since $q_T(\mathbf x_T|c)\sim\mathcal N(c,T+1)$, we can deduce that
\begin{align}
\mathrm{var}_{q_{0,\gamma_1,\gamma_0}^{\mathrm{deter}}(\mathbf x_0|c)}[\mathbf x_0]=2^{\gamma_0}(T+1)^{1-\gamma_1}(T+2)^{-\gamma_0}.
\end{align}
\end{proof}

\subsection{Proof of \Cref{thm:generalized_cfg}}\label{subsec:proof.5}

\begin{proof}
According to \cref{eq:generalized_cfg_ddim1,eq:generalized_cfg_ddim2}, we can write the variational lower bound of $\hat p_{\theta}(\mathbf x_{0:T}|c)$ as below:
\begin{align}
J_{\delta,\gamma_1,\gamma_0}&=\mathbb E_{q_\delta(\mathbf x_{0:T}|c)}[\log q_\delta(\mathbf x_{1:T}|\mathbf x_0,c)-\log\hat p_{\theta}(\mathbf x_{0:T}|c)] \\
&=\mathbb E\left[-\log\hat p_{\theta}(\mathbf x_0|\mathbf x_1,c)\right]\nonumber \\
&\qquad+\mathbb E\left[\sum_{t=2}^TD_{KL}(q_\delta(\mathbf x_{t-1}|\mathbf x_t,\mathbf x_0,c)\|\hat p_{\theta}(\mathbf x_{t-1}|\mathbf x_t,c))\right]\nonumber \\
&\qquad\qquad+C_1,
\end{align}
in which $C_1$ is a constant not involving $\gamma_1$, $\gamma_0$, and $\theta$.

Note that $\boldsymbol\epsilon_\theta(\mathbf x_t,c,t)=\mathbb E_{q(\boldsymbol\epsilon|\mathbf x_t,c)}[\boldsymbol\epsilon|\mathbf x_t]$.
Hence, for $t>1$:
\begin{align}
&\;\mathbb E_{q(\mathbf x_t,\mathbf x_0|c)}[D_{KL}(q_\delta(\mathbf x_{t-1}|\mathbf x_t,\mathbf x_0,c)\|\hat p_{\theta}(\mathbf x_{t-1}|\mathbf x_t,c))] \\
=&\;\mathbb E_{q(\mathbf x_t,\mathbf x_0|c)}[D_{KL}(q_\delta(\mathbf x_{t-1}|\mathbf x_t,\mathbf x_0,c)\|q_\delta(\mathbf x_{t-1}|\mathbf x_t,\hat{\mathbf f}_{\theta}^t(\mathbf x_t,c),c))] \\
\propto&\;\mathbb E_{q(\mathbf x_t,\mathbf x_0|c)}[\|\mathbf x_0-\hat{\mathbf f}_{\theta}^t(\mathbf x_t,c)\|_2^2] \\
\propto&\;\mathbb E_{\substack{\mathbf x_0\sim q(\mathbf x_0|c)\\\boldsymbol\epsilon\sim\mathcal N(\mathbf 0,\mathbf I)\\\mathbf x_t=\alpha_t\mathbf x_0+\sigma_t\boldsymbol\epsilon}}[\|\boldsymbol\epsilon-(\gamma_1\boldsymbol\epsilon_\theta(\mathbf x_t,c,t)+\gamma_0\boldsymbol\epsilon_\theta(\mathbf x_t,t))\|_2^2] \\
=&\;\mathbb E_{\mathbf x_0,\boldsymbol\epsilon}[\|\boldsymbol\epsilon\|_2^2+\|\gamma_1\boldsymbol\epsilon_\theta(\mathbf x_t,c,t)+\gamma_0\boldsymbol\epsilon_\theta(\mathbf x_t,t)\|_2^2]\nonumber \\
&\;\qquad-2\mathbb E_{\mathbf x_0,\boldsymbol\epsilon}[\langle\boldsymbol\epsilon,\gamma_1\boldsymbol\epsilon_\theta(\mathbf x_t,c,t)+\gamma_0\boldsymbol\epsilon_\theta(\mathbf x_t,t)\rangle] \\
=&\;\mathbb E_{\mathbf x_0,\boldsymbol\epsilon}[\|\boldsymbol\epsilon\|_2^2+\|\gamma_1\boldsymbol\epsilon_\theta(\mathbf x_t,c,t)+\gamma_0\boldsymbol\epsilon_\theta(\mathbf x_t,t)\|_2^2]\nonumber \\
&\;\qquad-2\mathbb E_{\mathbf x_0,\boldsymbol\epsilon}[\langle\mathbb E_{q(\boldsymbol\epsilon|\mathbf x_t,c)}[\boldsymbol\epsilon|\mathbf x_t],\gamma_1\boldsymbol\epsilon_\theta(\mathbf x_t,c,t)+\gamma_0\boldsymbol\epsilon_\theta(\mathbf x_t,t)\rangle]\label{eq:proof12} \\
=&\;\mathbb E_{\mathbf x_0,\boldsymbol\epsilon}[\|\boldsymbol\epsilon\|_2^2+\|\gamma_1\boldsymbol\epsilon_\theta(\mathbf x_t,c,t)+\gamma_0\boldsymbol\epsilon_\theta(\mathbf x_t,t)\|_2^2]\nonumber \\
&\;\qquad-2\mathbb E_{\mathbf x_0,\boldsymbol\epsilon}[\langle\boldsymbol\epsilon_\theta(\mathbf x_t,c,t),\gamma_1\boldsymbol\epsilon_\theta(\mathbf x_t,c,t)+\gamma_0\boldsymbol\epsilon_\theta(\mathbf x_t,t)\rangle] \\
=&\;\mathbb E_{\mathbf x_0,\boldsymbol\epsilon}[\|\boldsymbol\epsilon_\theta(\mathbf x_t,c,t)\|_2^2+\|\gamma_1\boldsymbol\epsilon_\theta(\mathbf x_t,c,t)+\gamma_0\boldsymbol\epsilon_\theta(\mathbf x_t,t)\|_2^2]\nonumber \\
&\;\qquad-2\mathbb E_{\mathbf x_0,\boldsymbol\epsilon}[\langle\boldsymbol\epsilon_\theta(\mathbf x_t,c,t),\gamma_1\boldsymbol\epsilon_\theta(\mathbf x_t,c,t)+\gamma_0\boldsymbol\epsilon_\theta(\mathbf x_t)\rangle]\nonumber \\
&\;\qquad\qquad+\mathbb E_{\mathbf x_0,\boldsymbol\epsilon}[\|\boldsymbol\epsilon\|_2^2-\|\boldsymbol\epsilon_\theta(\mathbf x_t,c,t)\|_2^2] \\
=&\;\mathbb E_{\mathbf x_0,\boldsymbol\epsilon}[\|\boldsymbol\epsilon_\theta(\mathbf x_t,c,t)-(\gamma_1\boldsymbol\epsilon_\theta(\mathbf x_t,c,t)+\gamma_0\boldsymbol\epsilon_\theta(\mathbf x_t,t))\|_2^2]+C_2 \\
=&\;\mathbb E_{\mathbf x_0,\boldsymbol\epsilon}[\|(\gamma_1-1)\boldsymbol\epsilon_\theta(\mathbf x_t,c,t)+\gamma_0\boldsymbol\epsilon_\theta(\mathbf x_t,t)\|_2^2]+C_2,
\end{align}
in which \cref{eq:proof12} is from \Cref{lem:1}, and $C_2=\mathbb E_{\mathbf x_0,\boldsymbol\epsilon}[\|\boldsymbol\epsilon\|_2^2-\|\boldsymbol\epsilon_\theta(\mathbf x_t,c,t)\|_2^2]$ is constant not involving $\gamma_1$ and $\gamma_0$.
As for $t=1$ we have similar derivation:
\begin{align}
&\;\mathbb E_{q(\mathbf x_1,\mathbf x_0|c)}[-\log\hat p_{\theta}(\mathbf x_0|\mathbf x_1,c))] \\
\propto&\;\mathbb E_{q(\mathbf x_1,\mathbf x_0|c)}[\|\mathbf x_0-\hat{\mathbf f}_{\theta}^t(\mathbf x_1,c)\|_2^2]+C_3 \\
\propto&\;\mathbb E_{\substack{\mathbf x_0\sim q(\mathbf x_0|c)\\\boldsymbol\epsilon\sim\mathcal N(\mathbf 0,\mathbf I)\\\mathbf x_1=\alpha_1\mathbf x_0+\sigma_1\boldsymbol\epsilon}}[\|\boldsymbol\epsilon-(\gamma_1\boldsymbol\epsilon_\theta(\mathbf x_1,c,1)+\gamma_0\boldsymbol\epsilon_\theta(\mathbf x_1,1))\|_2^2]+C_4 \\
=&\;\mathbb E_{\mathbf x_0,\boldsymbol\epsilon}[\|(\gamma_1-1)\boldsymbol\epsilon_\theta(\mathbf x_1,c,1)+\gamma_0\boldsymbol\epsilon_\theta(\mathbf x_1,1))\|_2^2]+C_5,
\end{align}
in which $C_3$, $C_4$, and $C_5$ are constants not involving $\gamma_1$ and $\gamma_0$.
\end{proof}

\section{Pseudo-codes of Lookup Table}\label{sec:supp_exp}

We below propose the pseudo-codes to achieve the lookup table and corresponding guided sampling in \Cref{alg:lookup_table,alg:guided_sampling}.

\begin{algorithm}[!ht]
\caption{Pseudo-code to achieve lookup table of \method in a PyTorch-like style.}
\label{alg:lookup_table}
\begin{lstlisting}[language=python]
def calculate_lookup_table(net, gnet, data_loader, timesteps):
    """Defines the function to maintain the lookup table.

    Args:
        net: Noise prediction model for conditional score function.
        gnet: Noise prediction model for unconditional score function.
        data_loader: Dataloader to calculate score functions.
        timesteps: All timesteps under the given sampling trajectory.

    Returns:
        coeffs: Lookup list under all timesteps and conditions.
    """
    sum1_dict, sum2_dict = dict(), dict()
    # Iterate the dataloader.
    for x, c in data_loader:
        # Iterate for all timesteps.
        sum1s, sum2s = list(), list()
        for nfe_idx, t in enumerate(timesteps):
            # Forward process.
            noise = torch.randn_like(x)
            x_t = alpha_t * x + sigma_t * noise

            # Calculate score functions first.
            eps_cond, eps_uncond = net(x_t, c, t), gnet(x_t, t)

            # Calculate the expectation.
            sum1s.append(eps_cond.mean(dim=0, keepdim=True))
            sum2s.append(eps_uncond.mean(dim=0, keepdim=True))

        # Save the results.
        update_dict(sum1_dict, sum2_dict, c, sum1s, sum2s)

    # Calculate coefficients according to Eq. (34) for all timesteps.
    coeffs = {c: sum1_dict[c] / sum2_dict[c] for c in sum1_dict}

    # Record the mean coefficient for other conditions.
    coeffs.update(avg=sum(coeffs.values()) / len(coeffs)

    return coeffs
\end{lstlisting}
\end{algorithm}

\begin{algorithm}[!ht]
\caption{Pseudo-code for guided sampling by lookup table of \method in a PyTorch-like style.}
\label{alg:guided_sampling}
\begin{lstlisting}[language=python]
def guided_sampler(sampler, net, gnet, gamma_1, noise, c, timesteps, coeffs):
    """Defines the guided sampling with lookup table.
    
    Args:
        sampler: Native sampler without guidance, e.g., DDIM sampler.
        net: Noise prediction model for conditional score function.
        gnet: Noise prediction model for unconditional score function.
        gamma_1: Guidance strength similar to CFG of type `float`.
        noise: Initial random noise to denoise.
        c: Input label.
        timesteps: All timesteps under the given sampling trajectory.
        coeffs: Pre-calculated lookup table.

    Returns:
        x: A batch of samples by guided sampling.
    """
    # Calculate gamma_0.
    if c in coeffs:
        gamma_0s = (1. - gamma_1) * coeffs[c]
    else:
        gamma_0s = (1. - gamma_1) * coeffs['avg']
    # Ensure gamma_0 <= 0 and gamma_1 + gamma_0 >= 1.
    gamma_0s = clamp(gamma_0s, gamma_1)

    # Guided sampling using gamma_1 and gamma_0.
    x = noise
    for t, gamma_0 in zip(timesteps, gamma_0s):
        # Calculate score functions and apply guided sampling.
        eps_cond, eps_uncond = net(x, c, t), gnet(x, t)
        eps = eps_cond * gamma_1 + eps_uncond * gamma_0
        x = sampler(x, eps, t)

    return x
\end{lstlisting}
\end{algorithm}

\end{document}